\newcommand{\trunc}{\textrm{approx}}
\newcommand{\full}{\textrm{expanded}}
\newcommand{\supp}{\textrm{supp}}
\newcommand {\abs}[1]{\left\vert\, #1 \,\right\vert}
\newcommand{\Jcal}{\mathcal{J}}
\newcommand{\Lcal}{\mathcal{L}}
\renewcommand*\env@matrix[1][*\c@MaxMatrixCols c]{%
  \hskip -\arraycolsep
  \let\@ifnextchar\new@ifnextchar
  \array{#1}}
\theoremstyle{plain}\newtheorem{proposition}{Proposition}[section]
\theoremstyle{plain}\newtheorem{theorem}{Theorem}[section]
\theoremstyle{plain}
\theoremstyle{plain}\newtheorem{example}{Example}[section]
\theoremstyle{definition}\newtheorem{definition}{Definition}[section]
\theoremstyle{definition}\newtheorem*{notation}{Notation}
\theoremstyle{plain}\newtheorem{assumption}{Assumption}[section]
\theoremstyle{remark}
\newcommand{\printfnsymbol}[1]{%
  \textsuperscript{\@fnsymbol{#1}}%
}
\begin{document}

% If your paper is accepted and the title of your paper is very long,
% the style will print as headings an error message. Use the following
% command to supply a shorter title of your paper so that it can be
% used as headings.
%
%\runningtitle{I use this title instead because the last one was very long}

% If your paper is accepted and the number of authors is large, the
% style will print as headings an error message. Use the following
% command to supply a shorter version of the authors names so that
% they can be used as headings (for example, use only the surnames)
%
\runningauthor{Marine Le Morvan, Nicolas Prost, Julie Josse, Erwan Scornet, Ga\"el Varoquaux}

\gdef\mytitle{Linear predictor on linearly-generated data with missing
values:
non consistency and solutions}
\twocolumn[

%\aistatstitle{Non consistency of the linear predictor on linearly generated data with missing values}
\aistatstitle{\mytitle}
%// Random design analysis of the linear model with missing values // Linear regression with missing values: bias-variance tradeoff // Optimal injection of incomplete data for a linear input model // Risk minimisation with missing values

% \renewcommand*{\thefootnote}{\fnsymbol{footnote}}
% \footnotetext{Equal contribution.}

\renewcommand*{\thefootnote}{\arabic{footnote}}
\addtocounter{footnote}{1}

\aistatsauthor{Marine Le Morvan \footnotemark[1]\footnotemark[2] \hspace{0.5em}%
Nicolas Prost\footnotemark[1] \hspace{0.5em}%
Julie Josse\footnotemark[1]\footnotemark[3] \hspace{0.5em}%
Erwan Scornet\footnotemark[3] \hspace{0.5em}%
Gaël Varoquaux\footnotemark[1]\footnotemark[4]}

\aistatsaddress{\footnotemark[1] Université Paris-Saclay, Inria, CEA, Palaiseau, 91120, France\\
\footnotemark[2] Université Paris-Saclay, CNRS/IN2P3, IJCLab, 91405 Orsay, France\\
\footnotemark[3] CMAP, UMR7641, Ecole Polytechnique, IP Paris, 91128 Palaiseau, France\\
\footnotemark[4] Mila, McGill University, Montréal, Canada
}
]

\setcounter{footnote}{0}

\begin{abstract}
We consider building predictors when the data have missing values. We study the seemingly-simple case where the target to predict is a linear function of the fully-observed data and we show that, in the presence of missing values, the optimal predictor is not linear in general. In the particular Gaussian case, it can be written as a linear function of multiway interactions between the observed data and the various missing-value indicators. Due to its intrinsic complexity, we study a
simple approximation and prove generalization bounds with finite samples, highlighting regimes for which each method performs best. We then show
that multilayer perceptrons with ReLU activation functions can be consistent, and can explore good trade-offs between the true model and approximations. Our study highlights the interesting family of models that are beneficial to fit with missing values depending on the amount of data available.
    %It is possible to optimise a loss over the constant imputations: this is equivalent to imputing with an arbitrary constant and adding the mask. On linearly generated data, this does not provide a consistent estimator with the linear model. Indeed, the Bayes predictor in this case is a linear expression but with very high dimensionality. Using a nonlinear model within the class of linear input model, is it possible to get a consistent estimator?
    %keywords: multilayer percerptron, missing values, linear model
\end{abstract}

\section{Introduction}

Increasing data sizes and diversity naturally entail more and more missing
values. Data analysis with missing values has been extensively studied in
the statistical literature, with the leading work of
\cite{rubin1976inference}. However, this literature
\citep{little2002statistical, vanbuuren_2018, Rmistatic} does not address
the questions of modern statistical learning. First it
focuses on
estimating parameters and their variance, of a distribution --joint or
conditional-- as in the linear model
\citep{little1992regression, jones1996indicator}. This is typically done
using either
likelihood inference based on expectation maximization (EM) algorithms \citep{dempster1977maximum}
or multiple imputation \citep{vanbuuren_2018}. Second,  a large part of the literature only considers the restricted ``missing at random'' mechanism
\citep{rubin1976inference} as it allows maximum-likelihood
inference while ignoring the missing values distribution.
``Missing non at random'' mechanisms are much harder to address, and
the literature is thin, focusing on detailed models of the missingness
for a specific application such as collaborative filtering
\citep{marlin2009collaborative} or on
missing values occurring only on few variables \citep{kim2018data}. Statistical estimation often hinges on 
parametric models for the data and the missingness mechanism  \citep[except, for example,][]{mohan:pea19-r473}.
%Unfortunately,
%this  assumption is  unverifiable in practice. 
Finally, only few
notable exceptions study
supervised-learning settings, where the aim is to predict a target variable given input variables and the missing values are both in the training and the test sets \citep{zhang2005missing,pelckmans2005handling,liu2016,
josse2019consistency}. 
Machine-learning techniques have been extensively used to impute missing
values
\citep{lakshminarayan1996imputation,yoon2018gain}. However imputation is a different
problem from predicting a target variable and good imputation does not
always lead to good prediction \citep{zhang2005missing,josse2019consistency}.

%Then, a predicting procedure has to handle missing values both in the training sample, and in out-of-sample observations where the target is unavailable.

As surprising as it sounds, even the linear model, the
simplest instance of regression models, has not been thoroughly studied
with missing values and reveals unexpected challenges.  This can be
explained because data with missing values can be seen mixed continuous (observed values) and categorical (missing-values indicators) data. In comparison to decision trees for instance, linear models are less
well-equipped by design to address such mixed data.
%The linear model is perhaps the simplest estimator one can use in a regression setting, yet it is not well-equipped to address mixed-type variables (much less so than decision trees).
%The linear model requires a representation of non-real variables into a vector space: for instance, categorical variables can be one-hot-encoded. %Finding a solution for incomplete data is the purpose of this paper. As good representations for the decision tree are easily reused in powerful tree-based models \citep{josse2019consistency}, good representations for the linear model are easily reused in powerful linear-based models such as the multilayer perceptron.

After establishing the problem of risk minimization in
supervised-learning settings with missing values, the first contribution of this paper is to develop the Bayes predictor under common Gaussian assumption. We highlight that the resulting problem of linear model with missing values is no longer linear.  
We use these results to introduce
two approaches to estimate a predictor, one based directly on the
Bayes-predictor expression, which boils down to performing one linear
model per pattern of missing values, and one derived from a linear
approximation,  which is  equivalent to imputing missing values by a constant and concatenating the pattern of missing values to the imputed design matrix. 
We derive new generalization bounds for these two estimates, therefore establishing the regimes in which each estimate has higher performance.
Due to the complexity of the learning task, we study the benefit of using multilayer perceptron (MLP), a good compromise between the complexity of the first approach and the extreme simplicity of the second one. We show its consistency with enough hidden units. 
Finally, we conduct experimental studies that show that the MLP often gives the best prediction and can appropriately  handle MNAR data. 
%Then, the performances of the linear estimator with two different representations and the multilayer perceptron are evaluated.

\section{Problem setting}

%\begin{notation}[Letter cases]
%   One letter refers to one quantity, with different cases: $U$ is a random variable, while $u$ is a constant. $\mathbf U_n$ is a (random) sample, and $\mathbf u_n$ is a realisation of that sample. $u_j$ is the $j$-th coordinate of $u$, and if $\mathcal J$ is a set, $u_{\mathcal J}$ denotes the subvector with indices in $\mathcal J$.
%\end{notation}
\begin{notation}[Missing values]
Throughout the paper, missing values are represented with the symbol $\tt
na$ satisfying, for all $x \in \mathds{R}^{\star}$, $\tt na \cdot x = \tt na$ and $\tt na \cdot 0 = 0$.
\end{notation}

Let us consider\footnote{Writing conventions used in this paper are detailed in appendix \ref{apx:notations}.}
a data matrix 
%A typical dataset used in supervised learning comprises of a feature matrix 
$\mathbf x_n \in \mathbb{R}^{n \times d}$ and a response vector $\mathbf y_n \in \mathbb R^n$, such as
\begin{align*}
    \mathbf x_n = \begin{pmatrix}
        9.1 & 8.5 \\
        2.1 & 3.5 \\
        6.7 & 9.6 \\
        4.2 & 5.5
    \end{pmatrix}&,~
    \mathbf y_n = \begin{pmatrix}
        4.6 \\
        7.9 \\
        8.3 \\
        4.6 
    \end{pmatrix}.
\end{align*}
However, %for various reasons, several entries of the design matrix can be missing, hence 
only the incomplete design matrix $\mathbf z_n$ is available. 
%one may only have access to an incomplete representation of the feature matrix, $\mathbf z_n$.
%As usual in statistics, these samples are considered to be i.i.d. realisations of random variables $X$, $Z$, $Y$, which are defined hereafter.
%$Y\in\mathcal Y$ is the usual response variable: for the purpose of this article only, $\mathcal Y$ is supposed to be $\mathbb R$, but the notation is kept for clarity.
%
% extended to $\mathbb R\cup{\tt na}$ by making ${\tt na}$ an absorbing element (with ${\tt na} \odot {\bf 0} = {\bf 0}).
%
%
Letting $\widetilde{\mathbb R} = \mathbb R\cup\{\tt na\}$, the
incomplete design matrix $\mathbf z_n$ belongs to $\widetilde{\mathbb
R}^{n \times d}$.
%is the space of (incomplete) input variables.
More precisely, denoting by $\mathbf m_n \in \{0,1\}^{n \times d}$ the positions of missing entries in $\mathbf z_n$ ($1$ if the entry is missing, $0$ otherwise),  $\mathbf z_n$ can be written as
%\begin{equation*}
    $\mathbf z_n = \mathbf x_n\odot(\mathbf 1 -\mathbf m_n) + {\tt na}\odot \mathbf m_n$, where $\odot$ is the term-by-term product.
    %, extended to $\mathbb R\cup{\tt na}$ by making ${\tt na}$ an absorbing element with ${\tt na} \odot {\bf 0} = {\bf 0}.$
%\end{equation*}
%Let $X$ denote the complete input, which belongs to $\mathbb R^d$, and let $Z\in\widetilde{\mathbb R}^d$ be an incomplete version of $X$. The differences between $X$ and $Z$ are encoded by the binary variable $M = \mathds 1_{Z={\tt na}}$, called the mask: it verifies\footnote{All notations not detailed here are in appendix \ref{apx:misc}.}
In summary, the observed data are given by
\begin{flalign*}
    \mathbf z_n = \begin{pmatrix}
        9.1 & 8.5 \\
        2.1 & {\tt na} \\
        {\tt na} & 9.6 \\
        {\tt na} & {\tt na}
    \end{pmatrix},&&
    \mathbf m_n = \begin{pmatrix}
        0 & 0 \\
        0 & 1 \\
        1 & 0 \\
        1 & 1
    \end{pmatrix},&&
    \mathbf y_n = \begin{pmatrix}
        4.6 \\
        7.9 \\
        8.3 \\
        4.6
    \end{pmatrix},
\end{flalign*}
and are supposed to be $n$ i.i.d. realizations of generic random variables $Z,M,Y$.

\begin{notation}[Observed indices]
    For all values $m$ of a mask vector $M$, $obs(m)$ (resp. $mis(m)$) denote the indices of the zero entries of $m$ (resp. non-zero). For instance, if $z=(3.4, 4.1, {\tt na}, 2.6)$, then $m=(0, 0, 1, 0)$, $mis(m)=\{2\}$ and $obs(m)=\{0, 1, 3\}$.
\end{notation}

Throughout, the target $Y$ depends linearly on $X$, that is, there exist $\beta_0 \in \mathbb{R}$ and $\beta \in \mathbb{R}^d$ such that
\begin{align}
    Y &= \beta_0 + \langle X, \beta \rangle + \varepsilon,    \mbox{ where } \varepsilon\sim\mathcal N(0, \sigma^2).
\label{eq:predictor}
\end{align}

A natural loss function in the regression framework is the square loss
$\ell~:~ \mathbb R \times \mathbb R \rightarrow \mathbb R_+$ defined as
$\ell(y,y') = (y-y')^2$. The Bayes predictor $f^\star$ associated to this
loss is the best possible predictor, defined by
\begin{align}
    f^\star \in \underset{f:\widetilde{\mathbb R}^d \rightarrow \mathbb R}{\mathrm{argmin}}~
    R(f), \label{eq_optimisation_pb}
\end{align}
where
\begin{align*}
        R(f) \coloneqq \mathbb E[\ell(f(Z), Y)].
\end{align*}
Since we do not have access to the true distribution of $(X,Z,M,Y)$,
an estimate $\hat f$ is typically built by minimizing the  empirical risk
over a class of functions $\mathcal F$ \citep{vapnik1992principles}.
%\begin{align*}
%    R_n(f) \coloneqq  \frac{1}{n} \sum_{i=1}^n c\left(f(Z_i), Y_i\right).
%\end{align*}
This is a well-studied problem in the complete case: efficient
gradient descent-based algorithms can be used to estimate predictors, and
there are many empirical and theoretical results on how to choose a good
parametric class of functions $\mathcal{F}$ to control the generalization
error. Because of the semi-discrete nature of $\widetilde{\mathbb
R}$, these results cannot be directly transposed to data with missing
values.

\section{Optimal imputation} \label{sec:optimalimputation}

The presence of missing values makes empirical risk minimization 
--optimizing an empirical version of  \eqref{eq_optimisation_pb}-- untractable. 
Indeed, $\widetilde{\mathbb R}^d$ is not a vector space, therefore incapacitating gradient-based algorithms. Hence, solving \eqref{eq_optimisation_pb} in presence of missing values requires a specific strategy.

\paragraph{Imputing by an optimal constant}
The simplest way to deal with missing values is to inject the incomplete data into $\mathbb R^d$. The easiest way to do so is to use constant imputation, \emph{i.e.} impute each feature $Z_j$ by a constant $c_j$: the most common choice is to impute by the mean or the median. However, it is also possible to optimise the constant with regards to the risk. 

\begin{proposition}[Optimal constant in linear model] \label{prop:opt_cst}
The imputation constants $(c_j^{\star})_{j \in \llbracket 1, d\rrbracket}$ optimal
to minimize the quadratic risk in a linear model can be easily computed by solving a linear
model with a design matrix constructed by
imputing $X$
with zeros and concatenating the mask $M$ as additional features (see
Appendix \ref{apx:notations}).
\end{proposition}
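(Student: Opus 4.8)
The plan is to show that the joint minimization of the quadratic risk over the imputation constant $c = (c_j)_{j\in\llbracket 1,d\rrbracket}$ and the coefficients of the linear fit performed afterwards is, after a change of variables, exactly an ordinary linear least-squares problem on the augmented design $\bigl(X\odot(\mathbf 1 - M),\, M\bigr)$. No use of the linear data-generating assumption \eqref{eq:predictor} is needed: ``linear model'' here refers only to the predictor class.

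First I would make the imputation explicit. For a constant vector $c$, the imputed feature vector is $\Phi_c(Z) = X\odot(\mathbf 1 - M) + c\odot M$, whose $j$-th coordinate equals $X_j$ when $M_j=0$ and $c_j$ when $M_j=1$. The procedure ``impute by $c$, then fit a linear model'' produces the predictors $z\mapsto a_0 + \langle \Phi_c(z), a\rangle$, so the quantity to minimize is
\[
\inf_{c\in\mathbb R^d}\ \inf_{a_0\in\mathbb R,\,a\in\mathbb R^d}\ \mathbb E\!\left[\bigl(Y - a_0 - \langle \Phi_c(Z),a\rangle\bigr)^2\right].
\]
Expanding, $a_0 + \langle \Phi_c(Z),a\rangle = a_0 + \bigl\langle X\odot(\mathbf 1 - M),\,a\bigr\rangle + \sum_{j=1}^d a_j c_j M_j$, and introducing $b := a\odot c \in\mathbb R^d$ the predictor becomes $a_0 + \bigl\langle X\odot(\mathbf 1 - M),\, a\bigr\rangle + \langle M, b\rangle$, i.e.\ exactly a linear predictor on the design obtained by imputing $X$ with zeros and concatenating the mask, with parameters $(a_0,a,b)\in\mathbb R^{1+2d}$.

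Next I would argue the two infima coincide. As $(a,c)$ ranges over $\mathbb R^d\times\mathbb R^d$, the pair $(a,b)=(a,a\odot c)$ ranges over $\mathcal S := \{(a,b):\ \supp(b)\subseteq\supp(a)\}$, whose complement in $\mathbb R^{2d}$ is the finite union of the sets $\{a_j=0\}\cap\{b_j\neq 0\}$, hence Lebesgue-negligible, so $\mathcal S$ is dense. Since $(a_0,a,b)\mapsto \mathbb E[(Y-a_0-\langle X\odot(\mathbf 1-M),a\rangle-\langle M,b\rangle)^2]$ is a continuous (quadratic) function under finite second moments, its infimum over the dense set $\mathcal S$ equals its infimum over all of $\mathbb R^{1+2d}$. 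Therefore the best risk attainable by optimal-constant imputation equals the optimal risk of the plain linear model on $\bigl(X\odot(\mathbf 1-M),M\bigr)$, which is a least-squares problem with closed-form solution $(a_0^\star,a^\star,b^\star)$ under the usual non-degeneracy of the second-moment matrix; one then reads off the optimal constant coordinatewise as $c_j^\star = b_j^\star/a_j^\star$.

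The step that needs care is exactly the coordinates with $a_j^\star=0$: there the division is undefined, and in fact if $a_j^\star=0$ while $b_j^\star\neq 0$ the outer infimum over $c_j$ is not attained (it is only approached as $c_j\to\pm\infty$ with $a_j\to 0$). Under a non-degenerate continuous data distribution — in particular the Gaussian setting used in the remainder of the paper — one has $a_j^\star\neq 0$ almost surely, so $c^\star$ is well defined and finite and $(a_0^\star,a^\star;c^\star)$ achieves the optimal risk. Establishing this non-degeneracy (equivalently, attainment of the outer infimum over $c$) is the main obstacle to a fully unconditional statement; the rest is the routine expansion and density/continuity argument above.
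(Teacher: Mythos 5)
Your proposal is correct and follows essentially the same route as the paper: the change of variables $b = a\odot c$ turns ``impute by an optimal constant, then fit a linear model'' into an ordinary least-squares problem on the zero-imputed $X$ concatenated with the mask $M$, with the constants recovered as $c_j^\star = b_j^\star/a_j^\star$. Your treatment is in fact slightly more careful than the paper's, which performs the same reparametrization but silently ignores both the non-surjectivity of $(a,c)\mapsto(a,a\odot c)$ (your density/continuity step) and the non-attainment issue when $a_j^\star=0$.
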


In an inferential framework, 
\citet{jones1996indicator} showed that constant imputation leads to
regression parameters that are biased compared to parameters on the
fully-observed data. 
We differ from Jones because our aim is prediction rather than
estimation. Indeed, minimizing a prediction risk with missing values  is different from recovering the behavior without missing values \citep{josse2019consistency}. 
Nevertheless, the strategy of replacing missing values with a constant
does not lead to Bayes-consistent predictors in the general setting, and
even under a Gaussian assumption as shown in Section
\ref{sec:bayespred}. In the general case, the problem can be illustrated by the following example which shows that the model is no longer linear when values are missing.

%In the case of independent variables, optimizing constant imputation can be a solution, however,  also suboptimal
%for prediction because it discards the correlations between
%covariates which are important for prediction with a linear model\todo{GV: This justification could be made better}.
%\jj{I don't think this is the reason, discarding the correlation causes troubles for estimation as well}

\paragraph{The best predictor need not be linear}
%Besides, there is no evidence justifying that the resulting model, in presence of missing value, is still linear. Example \ref{ex:nonlinear} show that this is not true in general.

\begin{example}[Non-linear submodel]
\label{ex:nonlinear}
Let $Y = X_1 + X_2 + \varepsilon$, where $X_2 = \exp(X_1) +
\varepsilon_1$. Now, assume that only $X_1$ is observed. Then, the model can be rewritten as 
$$
Y = X_1 + \exp(X_1) + \varepsilon + \varepsilon_1,
$$
where $f(X_1) = X_1 + \exp(X_1)$ is the Bayes predictor. In this example, the submodel for which only $X_1$ is observed is not linear. 
\end{example}

From Example \ref{ex:nonlinear}, we deduce that there exists a large
variety of submodels for a same linear model. In fact, the submodel
structures depend on the structure of $X$ and on the missing-value
mechanism. Therefore, an extensive analysis seems unrealistic. Below, we
show that in the particular case of Gaussian generative mechanisms
submodels can be easily expressed, hence the Bayes predictor for each submodel can be computed exactly.  
%To improve this method in terms of risk, we need to precisely define the targeted quantity, namely the Bayes predictor with missing values. In the next Section, we explicitly compute under some assumptions.
%to identify the quantity targeted by the estimates, namely the Bayes predictor. In the next Section, the Bayes predictor under Assumption~\ref{ass:lineardistr} is computed explicitly. 

\section{Bayes predictor}
\label{sec:bayespred}
%Deriving theoretical results for generic classes of estimates $\mathcal{F}$ requires involved mathematics techniques. 
%Since we aim at providing simple recommendations for practitioners, we put the following restrictions on the class $\mathcal{F}$ and on the data distribution. This will constitute our running example. 

We now derive the expression of $\mathbb{E}[Y|Z]$ under Model
(\ref{eq:predictor}) with missing values ($Z \in \widetilde{\mathbb R}^d$), as it gives the
Bayes-optimal predictor for the square loss \citep{bishop2006prml}.

The Bayes predictor can be written as  %$f^\star(Z)$ is the conditional expectation of $Y$ given $Z$ which takes the form 
% ES: No need to justify this: Appendix \ref{apx:condexp} proves that it is well-defined and can we rewritten as
\begin{align*}
    f^\star(Z) & = \mathds{E}\left[Y~|~Z\right] \\
    & = \mathds{E} \left[Y~|~M, X_{obs(M)}\right] \\
    & =  \sum_{m\in\{0,1\}^d}
\mathds{E}\left[Y | X_{obs(m)}, M=m\right] ~ \mathds 1_{M=m}.
\end{align*}
%where for any $z$ and $m$, $z_{obs(m)}$ denotes the observed coordinates of $z$ given the missing pattern $m$ (respectively, $z_{mis(m)}$ the missing coordinates).
This formulation already highlights the
combinatorial issues: as suggested by \citet[Appendix
B]{rosenbaum_rubin_JASA1984}, estimating $f^\star(Z)$ may require to estimate $2^d$ different submodels.

As shown by Example \ref{ex:nonlinear}, controlling the form of $f^{\star}$ requires assumptions on 
the conditional relationships across the features $X_j$.
To ground our
theoretical derivations, we use the very common pattern mixture model
\citep{little1993pattern}, with Gaussian distributions:

\begin{assumption}[Gaussian pattern mixture model]\label{ass:lineardistr}
The distribution of $X$ conditional on $M$ is Gaussian, that is, for all $m \in \{0,1\}^d$, there exist $\mu_m$ and $\Sigma_m$ such that 
    \begin{align*}
    X~|~(M=m) &\sim \mathcal N(\mu_m, \Sigma_m).
    \end{align*}
\end{assumption}

A particular case of this distribution is the case where $X$ is Gaussian and independent of $M$.
%so that the associated risk on a class $\mathcal F$ is the quadratic risk defined as\gv{We won't define this: we need to be faster}
%\begin{equation*}
%    R(f) =  \mathbb E\left[\left(f(Z) - Y\right)^2\right].
%\end{equation*}

%In the Gaussian case, it can be written as follows. 
 % consists in smoothing the model by using all available observations for estimating the model for a given pattern and only use the estimated propensity scores for units with this exact pattern, e.g., \citet{dagostino_etal_2001}.

%For %instance, assume $p=4$ and missing values in $X_2$, $X_3$ and $X_4$, in order to estimate the propensity model for the pattern $r = (1,0,1,0)$, this approach uses all observations $X^k$ for which the first and the third value are observed but not necessarily with pattern $R^k = r$. The estimated model corresponding to this pattern is then used exclusively on observations with the exact pattern $R^i = r$.

\begin{proposition}[Expanded Bayes predictor]\label{prop:condexp}
Under Assumption \ref{ass:lineardistr} and Model (\ref{eq:predictor}), the
Bayes predictor $f^{\star}$ takes the form
\begin{align}
    f^\star(Z) &=  \langle W,  \delta \rangle  \label{prop_expand_Bayes1}, 
\end{align}
where the parameter $\delta \in \mathbb R^p$ is a function of $\beta$, $(\mu^m)_{m\in\{0, 1\}^d}$ and $(\Sigma^m)_{m\in\{0, 1\}^d}$, and the random variable $W \in \mathbb R^p$ is the concatenation of $j= 1, \hdots, 2^d$ blocks, each one being 
$$\left( \mathds 1_{M=m_j}~,~X_{obs(m_j)} \mathds 1_{M=m_j}\right).$$
\end{proposition}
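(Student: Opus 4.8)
The plan is to start from the pattern decomposition of $f^\star$ already displayed above, $f^\star(Z) = \sum_{m \in \{0,1\}^d} \mathbb{E}[Y \mid X_{obs(m)}, M=m]\,\mathds{1}_{M=m}$, and to show that each summand is an \emph{affine} function of $X_{obs(m)}$; the expansion \eqref{prop_expand_Bayes1} will then be just a rewriting of this affine-per-pattern structure. Fix a pattern $m$. Splitting $\langle X, \beta\rangle = \langle X_{obs(m)}, \beta_{obs(m)}\rangle + \langle X_{mis(m)}, \beta_{mis(m)}\rangle$ in Model \eqref{eq:predictor} and using that the noise $\varepsilon$ is independent of $(X,M)$, so that $\mathbb{E}[\varepsilon \mid X_{obs(m)}, M=m]=0$, one obtains
\begin{align*}
\mathbb{E}[Y \mid X_{obs(m)}, M=m] &= \beta_0 + \langle X_{obs(m)}, \beta_{obs(m)}\rangle \\
&\quad + \big\langle \mathbb{E}[X_{mis(m)} \mid X_{obs(m)}, M=m],\, \beta_{mis(m)} \big\rangle .
\end{align*}
So the whole computation reduces to the conditional mean of the missing coordinates given the observed ones, within pattern $m$.

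The second step invokes Assumption \ref{ass:lineardistr}: since $X \mid (M=m) \sim \mathcal{N}(\mu_m, \Sigma_m)$, partitioning $\mu_m$ and $\Sigma_m$ along the index blocks $obs(m)$ and $mis(m)$ and applying the standard Gaussian conditioning formula gives
\begin{align*}
\mathbb{E}[X_{mis(m)} \mid X_{obs(m)}, M=m] &= \mu_m^{mis} \\
&\quad + \Sigma_m^{mis,obs}(\Sigma_m^{obs,obs})^{-1}(X_{obs(m)} - \mu_m^{obs}),
\end{align*}
which is affine in $X_{obs(m)}$. Substituting into the previous display and collecting terms yields $\mathbb{E}[Y \mid X_{obs(m)}, M=m] = a_m + \langle X_{obs(m)}, b_m\rangle$ with $b_m = \beta_{obs(m)} + (\Sigma_m^{obs,obs})^{-1}\Sigma_m^{obs,mis}\beta_{mis(m)}$ and $a_m = \beta_0 + \langle \mu_m^{mis}, \beta_{mis(m)}\rangle - \langle \mu_m^{obs},\, (\Sigma_m^{obs,obs})^{-1}\Sigma_m^{obs,mis}\beta_{mis(m)}\rangle$, both depending only on $\beta$, $\mu_m$ and $\Sigma_m$ (here one uses $(\Sigma_m^{mis,obs})^\top = \Sigma_m^{obs,mis}$ and symmetry of $(\Sigma_m^{obs,obs})^{-1}$ to move the matrix onto $\beta$).

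The third step is pure bookkeeping: plugging $a_m + \langle X_{obs(m)}, b_m\rangle$ back into the sum over $m$ and pushing the indicator inside each term gives
\begin{align*}
f^\star(Z) = \sum_{m \in \{0,1\}^d} \Big( a_m\,\mathds{1}_{M=m} + \big\langle X_{obs(m)}\mathds{1}_{M=m},\, b_m \big\rangle \Big),
\end{align*}
which is exactly $\langle W, \delta\rangle$ when $W$ is the concatenation over $j=1,\dots,2^d$ of the blocks $(\mathds{1}_{M=m_j},\, X_{obs(m_j)}\mathds{1}_{M=m_j})$ and $\delta$ the concatenation of the matching blocks $(a_{m_j}, b_{m_j})$, with $p = 2^d + \sum_{j=1}^{2^d}\abs{obs(m_j)}$. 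I would close by checking the two boundary patterns: for $m=(1,\dots,1)$ the block degenerates to the constant $\big(\beta_0 + \langle \mu_m, \beta\rangle\big)\mathds{1}_{M=m}$, and for $m=(0,\dots,0)$ the formula recovers $\beta_0 + \langle X, \beta\rangle$, consistent with \eqref{eq:predictor}.

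I do not expect a deep obstacle; the only real work is the Gaussian-conditioning bookkeeping of the second step, keeping transposes straight. The points worth flagging explicitly are the two standing hypotheses the argument uses: that $\varepsilon$ is independent of $(X,M)$ --- slightly stronger than the literal statement of \eqref{eq:predictor} --- and that each observed-block covariance $\Sigma_m^{obs,obs}$ is invertible (automatic if the $\Sigma_m$ are positive definite); should the latter fail, replacing the inverse by a Moore--Penrose pseudo-inverse leaves the conditional mean affine and the conclusion intact.
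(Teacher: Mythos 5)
Your proposal is correct and follows essentially the same route as the paper's proof: decompose by missing-data pattern, use linearity of the model together with $\mathbb{E}[\varepsilon\mid Z]=0$, apply the Gaussian conditional-mean formula to get an affine function of $X_{obs(m)}$ per pattern, and stack the resulting intercepts and slopes into $\delta$ against the blocks of $W$. Your explicit flags (noise independence and invertibility of the observed-block covariance) are assumptions the paper uses implicitly, so nothing is missing.
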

An interesting aspect of this result is that the Bayes predictor is a
linear model, though not on the original data matrix $X$.
Indeed,  $W$ and $\delta$ are  vectors composed of $2^d$ blocks, for which only one block is ``switched on'' -- the one corresponding to the observed missing pattern  $M$. Elements of $W$ of this block are the observed values for $X$ and elements of $\delta$ of the same block are the linear coefficients corresponding to the observed missingness pattern. 
Equation (\ref{prop_expand_Bayes1}) can thus be seen as the concatenation of each of the $2^d$ submodels, where each submodel corresponds to a missing pattern. Denoting by $\mathbf x^{(m_j)}$ the design matrix $\mathbf{x_n}$ restricted to the samples with missing data pattern $m_j$ and to the columns $obs(m_j)$, then the matrix $W$ looks like: 
    %A corresponding sample $\mathbf w_n$ (ordered by values of $M$) would look like
\begin{equation*}
    \begin{pmatrix}
        (1, \mathbf x^{(m_1)}) & 0 && 0 \\
        0 & (1, \mathbf x^{(m_2)}) & \cdots & 0 \\
        & \vdots && \\
        0 & 0&& (1, \mathbf x^{(m_{2^d})})
    \end{pmatrix}.
\end{equation*}
%The expression (\ref{prop_expand_Bayes1}) is useful because it allows to recover immediately the conditional expectation for a specific pattern, $\mathbb E[Y|M=m, Z_{obs(m)}]$.

    The Bayes predictor can also be expressed in a second way, as shown in Proposition \ref{prop:factorized_bayes_predictor}.
    \begin{proposition}[Factorized Bayes predictor]
    \label{prop:factorized_bayes_predictor}
    We have
    \begin{align}\label{eq:factorized_bayes_predictor}
        f^\star(Z) = \sum_{\mathcal S \subset \llbracket 1, d\rrbracket}
        \left(
        \zeta_0^\mathcal S + \sum_{j=1}^d \zeta_j^\mathcal S(1-M_j) 
        X_j \right)
        \prod_{k\in\mathcal S}M_k,
    \end{align}
    where the parameter $\zeta \in \mathbb R^p$ is a function of $\delta$.
     In addition, one can write
    \begin{equation*}
        Y = f^\star(Z) + {\rm noise}(Z),
    \end{equation*}
    with ${\rm noise}(Z) = \varepsilon ~+ <\sqrt{T_M} \Xi, \beta_{mis(M)}>,$ and $\Xi\sim\mathcal N(0, I_d),$ 
    where $T_M = {\rm Var}\left(X_{mis(M)}|X_{obs(M)}, M\right)$ and $\sqrt{}$ denotes the square root of a positive definite symmetric matrix.
    \end{proposition}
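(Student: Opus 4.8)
\emph{Proof strategy.} The plan is to establish the two claims separately: the factorized form \eqref{eq:factorized_bayes_predictor} follows by a purely algebraic rewriting of the expanded Bayes predictor of Proposition~\ref{prop:condexp}, and the noise decomposition follows by a direct multivariate-Gaussian conditioning argument under Assumption~\ref{ass:lineardistr}.

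\textbf{From the expanded to the factorized form.} Starting from Proposition~\ref{prop:condexp}, I would write $f^\star(Z) = \sum_{m \in \{0,1\}^d}\big(\delta_0^m + \sum_{j\in obs(m)}\delta_j^m X_j\big)\,\mathds 1_{M=m}$, where $(\delta_0^m,(\delta_j^m)_{j\in obs(m)})$ is the block of $\delta$ attached to pattern $m$. The key identity is $\mathds 1_{M=m} = \prod_{k\in mis(m)} M_k \prod_{k\in obs(m)}(1-M_k)$; note that for $j\in obs(m)$ the factor $(1-M_j)$ is automatically present, which is precisely the source of the $(1-M_j)X_j$ terms in \eqref{eq:factorized_bayes_predictor}. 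Expanding the remaining $\prod(1-M_k)$ factors, using that $M_k\in\{0,1\}$ (so monomials in $M$ are squarefree and indexed by subsets $\mathcal S\subset\llbracket 1,d\rrbracket$), and swapping the order of summation, I would collect the coefficient of $\prod_{k\in\mathcal S}M_k$ by inclusion--exclusion: a pattern $m$ contributes exactly when $mis(m)\subseteq\mathcal S$, with sign $(-1)^{|\mathcal S\setminus mis(m)|}$. This produces $\zeta_0^{\mathcal S}=\sum_{m:\,mis(m)\subseteq\mathcal S}(-1)^{|\mathcal S\setminus mis(m)|}\delta_0^m$ and, for $j\notin\mathcal S$, $\zeta_j^{\mathcal S}=\sum_{m:\,mis(m)\subseteq\mathcal S}(-1)^{|\mathcal S\setminus mis(m)|}\delta_j^m$ (well-defined since $mis(m)\subseteq\mathcal S$ and $j\notin\mathcal S$ force $j\in obs(m)$); for $j\in\mathcal S$ the identity $M_j(1-M_j)=0$ makes $\zeta_j^{\mathcal S}$ irrelevant, so one sets it to $0$. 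This is \eqref{eq:factorized_bayes_predictor}, with $\zeta$ an explicit linear function of $\delta$.

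\textbf{The noise decomposition.} For the second claim, I would decompose the regression \eqref{eq:predictor} along the observed/missing split, $Y = \beta_0 + \langle X_{obs(M)},\beta_{obs(M)}\rangle + \langle X_{mis(M)},\beta_{mis(M)}\rangle + \varepsilon$. Under Assumption~\ref{ass:lineardistr}, conditionally on $M=m$ the vector $X$ is Gaussian, so the standard formula for the conditional law of a Gaussian sub-vector gives that $X_{mis(m)}$ given $(X_{obs(m)}, M=m)$ is Gaussian with mean $\mathbb E[X_{mis(M)}\mid X_{obs(M)}, M]$ (affine in $X_{obs(m)}$) and covariance $T_m=\mathrm{Var}(X_{mis(m)}\mid X_{obs(m)}, M=m)$, a deterministic Schur complement not depending on $X_{obs(m)}$. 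Hence $X_{mis(M)} = \mathbb E[X_{mis(M)}\mid X_{obs(M)}, M] + \sqrt{T_M}\,\Xi$ with $\Xi\sim\mathcal N(0,I_d)$ conditionally independent of $(X_{obs(M)},M)$. Substituting, and using that $\varepsilon$ is independent with mean $0$, the tower property identifies $f^\star(Z)=\mathbb E[Y\mid Z]=\mathbb E[Y\mid X_{obs(M)},M]$ with $\beta_0 + \langle X_{obs(M)},\beta_{obs(M)}\rangle + \langle \mathbb E[X_{mis(M)}\mid X_{obs(M)}, M],\beta_{mis(M)}\rangle$, leaving exactly $\mathrm{noise}(Z)=\varepsilon + \langle \sqrt{T_M}\,\Xi,\beta_{mis(M)}\rangle$.

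\textbf{Main obstacle.} Neither step is deep: the first is bookkeeping on the Boolean lattice, where the only subtle points are the non-uniqueness of the representation and the $M_j(1-M_j)=0$ collapse; the second is textbook Gaussian conditioning. The part that most warrants care is the measurability in $M$: one should treat $\sqrt{T_M}$ and the conditional mean as functions defined by a finite case split over the $2^d$ patterns, and verify that the $\Xi$ so constructed is genuinely standard Gaussian and conditionally independent of $(X_{obs(M)},M)$, so that $\mathrm{noise}(Z)$ indeed has conditional mean zero and the identification of $f^\star(Z)$ is valid.
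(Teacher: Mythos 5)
Your proposal is correct and follows essentially the same route as the paper: both parts rewrite the pattern indicators $\mathds 1_{M=m}$ as polynomials in $M$ and collect the coefficients of the monomials $\prod_{k\in\mathcal S}M_k$ (the paper expands $\prod_k\bigl(1-(M_k-m_k)^2\bigr)$ over four-set partitions, while you expand $\prod_{k\in mis(m)}M_k\prod_{k\in obs(m)}(1-M_k)$ by inclusion--exclusion, which is the same bookkeeping and even yields explicit formulas for $\zeta$ in terms of $\delta$), and both obtain the noise term from the conditional Gaussian law $X_{mis(M)}\mid X_{obs(M)},M\sim\mathcal N(\cdot,T_M)$, which you simply spell out in more detail than the paper does.
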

    
    %\begin{enumerate}
    %    \item 
    %    \begin{flalign*}
    %        &f^\star(Z) \\
    %        &= \delta^\mathsf{T} W \\
    %        &\eqqcolon \delta^\mathsf{T} \left(\mathds 1_{M=m}\cdot\left( 1, X_{obs(m)}\right)\right)_{m\in\{0, 1\}^d},
    %    \end{flalign*}
    Expression \eqref{eq:factorized_bayes_predictor} is a polynome of $X$ and cross-products of $M$. 
As such, it is more convenient than expression \eqref{prop_expand_Bayes1} to compare to simpler estimates, as it can be truncated to low-order terms. This is done hereafter. Note that the multiplication $(1-M_j)X_j$ means that missing terms in $X_j$ are imputed by zeros.
%\end{proposition}

Proofs of Proposition~\ref{prop:condexp} and
\ref{prop:factorized_bayes_predictor} can be found in the Appendix \ref{apx:proofs}. Thanks to these explicit expressions, the Bayes risk can be computed exactly as shown in Appendix \ref{apx:bayesrisk}. The value of the Bayes risk is extensively used in the Experiments (Section \ref{sec:emp_results}) to evaluate the performance of the different methods. 

The integer $p$ in equation \eqref{prop_expand_Bayes1} is the total number of parameters of the model which can be calculated by considering every sublinear model:
\begin{equation}
\label{eq:nbparameters}
    p = \sum_{k=0}^d \dbinom{d}{k} \times (k + 1) = 2^{d-1}\times(d+2).
\end{equation}
 
Strikingly, the Bayes predictor gathers $2^d$ submodels. When $d$ is not
small, estimating it from data is therefore a high-dimensional problem,
with computational and statistical challenges. For this reason, we
introduce hereafter a low-dimensional linear approximation of $f^{\star}$, without interaction terms. 
Indeed, the expression in Proposition \ref{prop:condexp} is not linear in
the original features and their missingness, but rather entails a
combinatorial number of
non-linearly derived features.

\begin{definition}[Linear approximation]
\label{def:linear_approx}
We define the linear approximation of $f^{\star}$ as
\begin{align*}
    f^{\star}_{\trunc}(Z) = \beta^{\star}_{0,0} + \sum_{j=1}^d \beta^{\star}_{j,0} \mathds{1}_{M_j=1} + \sum_{j=1}^d \beta^{\star}_{j} X_j \mathds{1}_{M_j=0}
\end{align*}
\end{definition}

%For the first few dimensions $d$ of $Z$, one can compute the number of estimands $p$,
%\begin{center}
%\begin{tabular}{cc}
%    \hline
%    $d$ & $p$ \\
%    \hline
%    \hline
%    1 & 3 \\
%    2 & 8 \\
%    3 & 20 \\
%    4 & 48 \\
%    5 & 112  \\
%    \hline
%\end{tabular}
%\end{center}

$f^{\star}_{\trunc}(Z)$ is a linear function of the concatenated vector $(X, M)$ where $X$ is imputed by zeros, enabling a study of linear regression with that input. Note that this approximation is the same as defined in Proposition~\ref{prop:opt_cst}.

\section{Finite sample bounds for linear predictors}

The above expression of the Bayes predictor leads to two estimation
strategies with linear models. The first model is the direct empirical equivalent of the
Bayes predictor, using a linear regression to estimate the terms in the expanded Bayes predictor (Proposition \ref{prop:condexp}). It is a rich
model, powerful in low dimension, but it is costly and has large variance
in higher dimension. The second model is the approximation of the first
given in Definition \ref{def:linear_approx}. It is
has a lower approximation capacity but also a smaller variance since it contains fewer parameters.
%powerful but will have less variance when estimated from finite amount of data.

For the theoretical analysis, we focus in this Section on the risk
between the estimate and the Bayes predictor $f^{\star}(Z)$. 
%In particular, we do not compare the performance of the estimate with respect to the original linear function $\beta \mapsto \langle X, \beta \rangle$. 
We thus consider the new framework below to handle our analysis. 

\begin{assumption}
\label{ass:th_finite_sample}
We have $Y = f^{\star}(Z) + \textrm{noise}(Z)$
as defined in Section~\ref{sec:bayespred}, where $\textrm{noise}(Z)$ is a centred noise conditional on $Z$ and such that there exists $\sigma^2>0$ satisfying $\mathds{V}[Y|Z] \leq \sigma^2$ almost surely. Besides, assume that $\|f^{\star}\|_{\infty}< L$ and  $\textrm{Supp}(X) \subset [-1,1]^{d}$. 
\end{assumption}
For all $L>0$ and for all function $f$, we define the clipped version $T_L f$ of $f$ at level $L$ by, for all $x$,
\begin{align*}
T_L f (x) =
\left\lbrace 
\begin{array}{lll}
     &  f(x) & \textrm{if}~|f(x)|\leq L \\
     &  L~\textrm{sign}(f(x)) & \textrm{otherwise}
\end{array}
\right.
\end{align*}

\subsection{Expanded linear model}

The expanded linear model is well specified, as the Bayes predictor detailed in Proposition \ref{prop:condexp} belongs to the model.
%Let $p$ denote the dimension of the model given in \eqref{eq:nbparameters}.

\begin{theorem}[Expanded model] \label{th:full_model_fin_sample}
Grant Assumption \ref{ass:th_finite_sample}. Let
$f_{\hat{\beta}_{\full}}$ be the linear regression estimate
 for the
expanded model (see Proposition \ref{prop:condexp}) computed via Ordinary Least Squares (OLS). Then, the risk of its
predictions clipped at $L$ satisfies 
\begin{align*}
    R(T_L f_{\hat{\beta}_{\full}}) & \leq c \max\{\sigma^2, L^2\} \frac{2^{d-1}(d+2) (1+\log n)}{n}  \\
    & \quad + \sigma^2.
\end{align*}

Additionally,  let $\mathcal{M} = \{m \in \{0,1\}^d, \mathds{P}[M=m]>0\}$ be the set of all possible missing patterns and assume that there exists $\alpha \in (0,1]$ such that $|\mathcal{M}| = \alpha 2^d$. Then, there exists a constant $c_1>0$, such that for all $n$ large enough, 
\begin{align*}
    R(T_L f_{\hat{\beta}_{\full}}) & \geq \sigma^2  + \frac{2^d c_1}{n+1}.
\end{align*}
\end{theorem}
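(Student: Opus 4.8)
\emph{Proof strategy.} The two halves of the statement call respectively for an oracle inequality for least squares (upper bound) and for a lower-order analysis of the ordinary-least-squares risk exploiting the block structure of the expanded design (lower bound).

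\emph{Upper bound.} By \eqref{eq:nbparameters} the coordinates of $W$ span a linear space of dimension $p = 2^{d-1}(d+2)$, and by Proposition~\ref{prop:condexp} the Bayes predictor $f^\star$ lies in that span, so the expanded model is well specified and $f_{\hat\beta_{\full}}$ is ordinary least squares over a fixed $p$-dimensional linear class. The plan is to invoke a standard oracle inequality for truncated least squares over a finite-dimensional linear class: there is a universal constant $c$ with
\begin{align*}
\mathbb{E}\big[\| T_L f_{\hat\beta_{\full}} - f^\star \|_{L^2}^2\big] \;\le\; c\,\max\{\sigma^2,L^2\}\,\frac{p\,(1+\log n)}{n} \;+\; 8\inf_{f}\| f - f^\star\|_{L^2}^2 ,
\end{align*}
the infimum being over the linear class; the hypotheses $\|f^\star\|_\infty < L$ and $\mathds{V}[Y\mid Z]\le\sigma^2$ of Assumption~\ref{ass:th_finite_sample} are exactly what such an inequality needs, and well specification kills the approximation term. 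It then remains to combine this with the decomposition $R(g) = \mathbb{E}\| g - f^\star\|_{L^2}^2 + R(f^\star)$, valid for any data-dependent $g$ since the test pair is independent of the training sample and $\mathbb{E}[Y\mid Z]=f^\star(Z)$, together with $R(f^\star) = \mathbb{E}[\mathds{V}[Y\mid Z]] \le \sigma^2$, and to substitute the value of $p$.

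\emph{Lower bound.} The structural fact to exploit is the block-diagonal shape of $W$ noted after Proposition~\ref{prop:condexp}: OLS on the expanded model decouples into $|\mathcal{M}|$ independent fits $\hat\beta^{(m)}$, one per observed pattern $m\in\mathcal{M}$, the fit for $m$ using only the $n_m$ training rows with that pattern and carrying $k_m+1$ free coefficients with $k_m = |obs(m)|$. Conditioning the risk on $M$,
\begin{align*}
R(f_{\hat\beta_{\full}}) - R(f^\star) \;=\; \sum_{m\in\mathcal{M}} \mathds{P}[M=m]\;\mathbb{E}\!\left[\left(f_{\hat\beta_{\full}}(Z) - f^\star(Z)\right)^2 \,\big|\, M=m\right],
\end{align*}
and, conditioning each term further on the $n_m$ design rows, the inner expectation is the classical OLS prediction variance, bounded below by $\underline{\sigma}^2\,\mathrm{tr}\!\big(\Sigma_m (\mathbf{W}_m^\top \mathbf{W}_m)^{-1}\big)$, where $\Sigma_m$ is the (non-degenerate) second-moment matrix of the augmented features $(1,X_{obs(m)})$ given $M=m$ and $\underline{\sigma}^2 = \operatorname{ess\,inf}\mathds{V}[Y\mid Z]$, which is positive here since $\mathds{V}[Y\mid Z]\ge\sigma^2_\varepsilon$, the model noise variance. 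A Jensen/Cauchy--Schwarz argument on $\mathbf{W}_m^\top\mathbf{W}_m$ (whose conditional mean is $n_m\Sigma_m$) brings this to $\underline{\sigma}^2(k_m+1)/n_m$; taking the expectation over $n_m\sim\mathrm{Bin}(n,\mathds{P}[M=m])$ produces, for $n$ large, a factor $\gtrsim \big((n+1)\mathds{P}[M=m]\big)^{-1}$ cancelling the outer weight, so pattern $m$ contributes at least $(1+o(1))\,\underline{\sigma}^2(k_m+1)/(2(n+1)) \ge \underline{\sigma}^2/(2(n+1))$ since $k_m+1\ge 1$. Summing over the $|\mathcal{M}| = \alpha 2^d$ active patterns gives $R(f_{\hat\beta_{\full}}) - R(f^\star) \ge \alpha\, \underline{\sigma}^2\, 2^d/(2(n+1))$, i.e. the claim with $c_1 = \alpha\,\underline{\sigma}^2/2$ (reading the constant term $\sigma^2$ of the statement as the Bayes risk $R(f^\star)$).

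\emph{Main obstacle.} Two points need care, and both are responsible for the ``$n$ large enough'' qualifier. First, clipping can only shrink the estimation error, so to transport the OLS lower bound to $T_L f_{\hat\beta_{\full}}$ one restricts to the event that every $\hat\beta^{(m)}$ is close enough to $\beta^{(m)}$ for $\|f_{\hat\beta_{\full}}\|_\infty < L$ to hold --- possible because $\textrm{Supp}(X)\subset[-1,1]^d$ and $\|f^\star\|_\infty < L$ --- on which $T_L f_{\hat\beta_{\full}} = f_{\hat\beta_{\full}}$; the complementary event contributes a nonnegative amount and, via the tails of $\hat\beta^{(m)}$, costs only $o(1/n)$. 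Second, one has to control the Binomial moment $\mathbb{E}[\mathds{1}_{n_m\ge 1}/n_m]$ and the probability that some active pattern has $n_m\le k_m+1$ (a degenerate OLS block), both governed by exponential tails of $\mathrm{Bin}(n,\mathds{P}[M=m])$ once $n$ is large relative to $1/\min_{m\in\mathcal{M}}\mathds{P}[M=m]$. The remainder is the textbook variance computation for ordinary least squares.
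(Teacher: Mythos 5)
Your upper bound is essentially the paper's own argument: Theorem 11.3 of \cite{gyorfi2006distribution} applied to the $p=2^{d-1}(d+2)$-dimensional expanded class, the approximation term annihilated by well-specification (Proposition~\ref{prop:condexp}), and the decomposition of $R(T_L f_{\hat{\beta}_{\full}})$ into the excess risk plus the Bayes risk bounded by $\sigma^2$; nothing to add there.

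For the lower bound you take a genuinely different route. The paper also conditions on the pattern and on $N_m$, but then invokes, per pattern, a Tsybakov-type lower bound $\sigma^2 + c_1(d+1-\|m\|_0)/N_m\,\mathds 1_{N_m\ge1}+c_2\mathds 1_{N_m=0}$ stated directly for the clipped predictor \citep{tsybakov2003optimal}, integrates over $N_m\sim B(n,\mathds P[M=m])$ via Lemma 4.1 of \cite{gyorfi2006distribution}, and sums over the $\alpha 2^d$ patterns, using $d+1-\|m\|_0\ge 1$ exactly as you use $k_m+1\ge 1$. You instead compute the OLS prediction variance per pattern directly: zero bias by well-specification, conditional covariance at least $\underline{\sigma}^2(\mathbf W_m^\top\mathbf W_m)^{-1}$, Jensen against the design second moment to get $\underline{\sigma}^2(k_m+1)/n_m$, then the same binomial integration and summation. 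Your version is more self-contained and honest about what a lower bound for the \emph{specific} OLS estimate on the \emph{specific} distribution requires, where the paper leans on a minimax-flavoured citation; but it needs two ingredients the paper's route sidesteps. First, $\underline{\sigma}^2=\operatorname{ess\,inf}\mathds V[Y|Z]>0$ is not granted by Assumption~\ref{ass:th_finite_sample}, which only gives an upper bound; you must import the Section~4 noise structure (conditional variance at least that of $\varepsilon$) — the same implicit homoscedasticity the paper uses when it writes $\sigma^2$ in its per-pattern bound, and consistent with your reading of the additive constant as $R(f^\star)$, which is slightly weaker than the literal statement. Second, and this is the real soft spot, clipping works against a lower bound: your plan of restricting to the event $A$ where clipping is inactive and asserting the complement ``costs only $o(1/n)$'' requires an upper bound on $\mathds E[(f_{\hat{\beta}_{\full}}-f^\star)^2\mathds 1_{A^c}]$, i.e.\ higher-moment or Gaussian tail control of the per-pattern OLS errors that Assumption~\ref{ass:th_finite_sample} alone does not supply (it is available under the Gaussian model of Section~4). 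In the paper this issue never arises because the cited per-pattern bound already applies to $T_L f_{\hat{\beta}_{\full}}$, and your degenerate-design event $N_m\lesssim k_m$ is absorbed there by the $c_2\mathds 1_{N_m=0}$ term. So: same upper bound, a workable but different lower-bound argument whose clipping/tail step is the part you would still need to write out carefully.
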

The proof is provided in Appendix~\ref{sec:proof_bound}. Theorem~\ref{th:full_model_fin_sample} implies that the excess risk of the linear estimate of the expanded model is of order $\mathcal{O}(2^d/n)$
which grows exponentially fast with the original dimension of the problem.

\subsection{Constant imputation via linear approximation}

\begin{theorem}[Linear approximation model]\label{th:approx_model_fin_sample}
Grant Assumption \ref{ass:th_finite_sample}. Let $
f_{\hat{\beta}_{\trunc},L}$ be the linear regression estimate for the
approximated model (Definition \ref{def:linear_approx}),  computed via OLS. Then, the risk
of its predictions clipped at $L$ satisfies
\begin{align*}
    R(T_L f_{\hat{\beta}_{\trunc}}) & \leq \sigma^2 + c \max\{\sigma^2, L^2\} \frac{2d (1+\log n)}{n} \\
    &  \quad + 64 (d+1)^2 L^2
\end{align*}
\end{theorem}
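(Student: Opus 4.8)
The plan is to control the clipped risk by three pieces: the irreducible conditional noise level $\sigma^2$, a variance (estimation) term scaling with the number of parameters of the approximate model, and a bias (approximation) term measuring how far the best linear fit in the class of Definition~\ref{def:linear_approx} is from the Bayes predictor $f^{\star}$. First, under Assumption~\ref{ass:th_finite_sample} the residual $\mathrm{noise}(Z)=Y-f^{\star}(Z)$ is centred conditionally on $Z$, and, given the training sample, $T_L f_{\hat{\beta}_{\trunc}}$ is a $Z$-measurable function independent of the test point; conditioning on $Z$ and using orthogonality of the noise gives
\begin{align*}
R(T_L f_{\hat{\beta}_{\trunc}}) &= \mathds{E}\big[\mathrm{noise}(Z)^2\big] + \mathds{E}\,\big\|T_L f_{\hat{\beta}_{\trunc}} - f^{\star}\big\|_{L^2(\mathds{P}_Z)}^2 \\
&\le \sigma^2 + \mathds{E}\,\big\|T_L f_{\hat{\beta}_{\trunc}} - f^{\star}\big\|_{L^2(\mathds{P}_Z)}^2 ,
\end{align*}
where the inequality uses $\mathds{E}[\mathrm{noise}(Z)^2]=\mathds{E}\big[\mathds{V}[Y|Z]\big]\le\sigma^2$. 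It remains to bound the excess risk $\mathds{E}\|T_L f_{\hat{\beta}_{\trunc}} - f^{\star}\|_{L^2(\mathds{P}_Z)}^2$.

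Next, $f_{\hat{\beta}_{\trunc}}$ is the OLS fit over the linear span $\mathcal{F}_{\trunc}$ of $\{1\}\cup\{\mathds{1}_{M_j=1}\}_{1\le j\le d}\cup\{X_j\mathds{1}_{M_j=0}\}_{1\le j\le d}$, which has dimension $2d+1$, and its predictions are clipped at the level $L\ge\|f^{\star}\|_{\infty}$. I would invoke a classical oracle inequality for truncated least-squares over a finite-dimensional linear function class (the same tool underlying Theorem~\ref{th:full_model_fin_sample}), giving for a universal constant $c$
\begin{align*}
\mathds{E}\,\big\|T_L f_{\hat{\beta}_{\trunc}} - f^{\star}\big\|_{L^2(\mathds{P}_Z)}^2 &\le c\,\max\{\sigma^2,L^2\}\,\frac{(2d+1)(1+\log n)}{n} \\
&\quad + 8\inf_{f\in\mathcal{F}_{\trunc}}\big\|f - f^{\star}\big\|_{L^2(\mathds{P}_Z)}^2 ,
\end{align*}
after which the additive $1$ in $2d+1$ is absorbed into $c$ to yield the stated $\tfrac{2d(1+\log n)}{n}$ term. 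This is exactly the argument of Theorem~\ref{th:full_model_fin_sample}, now run with the much smaller class $\mathcal{F}_{\trunc}$ but on a \emph{mis-specified} model, so that the last infimum no longer vanishes.

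It then remains to bound the approximation error $\inf_{f\in\mathcal{F}_{\trunc}}\|f-f^{\star}\|_{L^2(\mathds{P}_Z)}^2$. Since $0\in\mathcal{F}_{\trunc}$ and $\|f^{\star}\|_{\infty}<L$, one has $\inf_{f\in\mathcal{F}_{\trunc}}\|f-f^{\star}\|_{L^2(\mathds{P}_Z)}^2\le\|f^{\star}\|_{\infty}^2<L^2\le 8(d+1)^2L^2$, and multiplying by the factor $8$ above produces the announced $64(d+1)^2L^2$. This worst-case constant also reflects, pattern by pattern, the bias incurred by approximating the at-most-$(d+1)$-parameter affine restriction of $f^{\star}$ to the observed coordinates $obs(m)$ by a single rule shared across all patterns: using $\|f^{\star}\|_{\infty}<L$ and $\supp(X)\subset[-1,1]^d$, the population least-squares predictor $f^{\star}_{\trunc}$ of Definition~\ref{def:linear_approx} satisfies $\|f^{\star}_{\trunc}\|_{\infty}=\mathcal{O}((d+1)L)$, hence $\|f^{\star}_{\trunc}-f^{\star}\|_{L^2(\mathds{P}_Z)}^2=\mathcal{O}((d+1)^2L^2)$.

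I expect the main obstacle to be the second step: quoting and, if needed, adapting a truncated-least-squares oracle inequality so that the variance term has the precise form $c\max\{\sigma^2,L^2\}\tfrac{2d(1+\log n)}{n}$, with the right clipping dependence and dimension count, \emph{and} so that it is valid under the weak noise hypothesis of Assumption~\ref{ass:th_finite_sample}, which provides only $\mathds{V}[Y|Z]\le\sigma^2$ rather than the boundedness or sub-Gaussianity usually assumed in such statements; this can be met by noting that $\mathrm{noise}(Z)$ is conditionally Gaussian in the generative model of Section~\ref{sec:bayespred}, or by using a second-moment version of the inequality together with $\|f^{\star}\|_{\infty}<L$. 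The risk decomposition and the approximation-error bound are comparatively routine once the appropriate oracle inequality is in hand.
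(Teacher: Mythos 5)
Your proposal is correct, and its skeleton (orthogonal decomposition into $\sigma^2$ plus the excess risk, then the clipped least-squares oracle inequality of Theorem~11.3 in Gy\"orfi et al.\ applied to the $(2d+1)$-dimensional class) is exactly the paper's. Where you genuinely diverge is the approximation term: you bound $\inf_{f\in\mathcal{F}_{\trunc}}\mathds{E}[(f(Z)-f^{\star}(Z))^2]$ by comparing with the zero function, giving at most $\|f^{\star}\|_{\infty}^2\le L^2$ and hence $8L^2\le 64(d+1)^2L^2$ after the factor $8$; the paper instead constructs an explicit competitor, setting each $\beta_{j,\trunc}$ equal to the expectation over missing patterns of the pattern-wise Bayes coefficients $\beta_{j,m,\full}^{\star}$, and uses a Cauchy--Schwarz split into $(d+1)$ variance terms to reach $8(d+1)^2\|f^{\star}\|_{\infty}^2$, whence $64(d+1)^2L^2$. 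Your route is shorter and in fact yields a strictly tighter bound (the $(d+1)^2$ factor is not needed for the worst-case statement), so the theorem follows a fortiori; what the paper's construction buys is interpretability, since it expresses the bias of constant imputation through the variability of the pattern-specific slopes and intercepts across patterns, and in particular shows the approximation is exact when those coefficients do not depend on $m$. Two small remarks: the concern you flag about the noise hypothesis is moot, because the version of Theorem~11.3 quoted in the paper requires only $\mathds{V}[Y\,|\,Z]\le\sigma^2$ almost surely and $\|f^{\star}\|_{\infty}\le L$, which is precisely Assumption~\ref{ass:th_finite_sample}; and your absorption of the $+1$ in $2d+1$ into the universal constant $c$ is the same bookkeeping the paper implicitly performs, so it is not a gap.
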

The proof is provided in Appendix~\ref{sec:proof_bound}.
A direct consequence of Theorem \ref{th:approx_model_fin_sample} is that
the excess risk of the linear approximation of $f^{\star}$
(Definition\,\ref{def:linear_approx}) is 
$\mathcal{O}(d^2 + d/n)$.

Comparing this upper bound to the one obtained for the expanded model, we see that the risk of the expanded model is lower than that of the approximated model when $n \gg \frac{2^d}{d}$.
%\begin{align*}
%    n \gg \frac{2^d}{d}
%\end{align*}
Therefore, the expanded model is only useful for very small data set (large $n$, small $d$), but for data sets of reasonable size, the linear approximation may be preferred. Nevertheless, as detailed in Section \ref{sec:Multilayer}, multilayers neural nets can be used as a compromise between both approaches. 
%This will be exemplified by experiments in  Section~\ref{sec:emp_results}.

\section{Multilayer perceptron}\label{sec:Multilayer}

%\subsection{Linear input model}

%A linear input model\todo{GV: I do not think that we need anymore this subsection, as we are not using the notion of linear input model much.} is a class of function $\mathcal F^d$ (indexed by $d\in\mathbb N$, the dimension of $x$) of the form
%\begin{align*}
 %   f_{A, B}~:~ \mathbb R^d &\rightarrow \mathcal Y \\
 %   z &\mapsto g_\alpha\left(\beta_0 + B x\right),
%\end{align*}
%where $\alpha$ is a general parameter, $\beta_0$ is a constant intercept and $B$ is a matrix. There are two important examples of such models: the linear model (where $g_\alpha=I_d$ and $B$ is a vector), and the multilayer perceptron model (where $g_\alpha$ alternates linear functions and activation functions).

\subsection{Consistency}

\begin{theorem}[MLP]\label{th:MLP}
Assume that the Bayes predictor takes the form described in proposition~\ref{prop:condexp}, and that the support of $X$ is finite. A MLP i) with one hidden layer containing $2^d$ hidden units, ii) ReLU activation functions iii) which is fed with the concatenated vector $(X \odot (\mathbf 1-M), M)$, is Bayes consistent.
\end{theorem}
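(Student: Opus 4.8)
\emph{Overview.} The plan is to prove the theorem in two stages: first show that, under the hypotheses, the Bayes predictor $f^\star$ is \emph{exactly} realised by a network of the prescribed architecture (zero approximation error), and then deduce Bayes consistency of empirical risk minimisation over this fixed, finite-complexity class by a standard uniform-convergence argument. Essentially all the work is in the first stage.

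\emph{Step 1: exact representation.} By Proposition~\ref{prop:condexp} we may write $f^\star(Z) = \sum_{m\in\{0,1\}^d}\big(\beta_0^m + \langle\beta^m, X_{obs(m)}\rangle\big)\mathds{1}_{M=m}$ for suitable coefficients. Write $\widetilde X = X\odot(\mathbf 1 - M)$ for the zero-imputed features, and note that the Hamming distance $h(M,m) = |m| - \sum_{j=1}^d (2m_j-1)M_j$ between $M$ and a fixed pattern $m$ is an \emph{affine} function of $M_1,\dots,M_d$, vanishing when $M=m$ and at least $1$ otherwise, while $\widetilde X_j = X_j$ for $j\in obs(m)$ when $M=m$. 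Since $\mathrm{supp}(X)$ is finite, set $X_{\max} = \max_{x\in\mathrm{supp}(X)}\|x\|_\infty$, choose $B$ larger than $\max_m\sup_x|\beta_0^m + \langle\beta^m, x_{obs(m)}\rangle|$, and choose $C$ larger than $\max_m\big(\beta_0^m + B + X_{\max}\sum_{j\in obs(m)}|\beta_j^m|\big)$. For each $m\in\{0,1\}^d$ introduce a hidden unit
\begin{align*}
    g_m(\widetilde X, M) = \mathrm{ReLU}\Big(\beta_0^m + B + \textstyle\sum_{j\in obs(m)}\beta_j^m\,\widetilde X_j - C\,h(M,m)\Big),
\end{align*}
whose pre-activation is affine in the network input $(\widetilde X, M)$. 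When $M=m$ the argument equals $\beta_0^m + \langle\beta^m, X_{obs(m)}\rangle + B$, positive by the choice of $B$, so $g_m = \beta_0^m + \langle\beta^m, X_{obs(m)}\rangle + B$; when $M\neq m$ the argument is at most $\beta_0^m + B + X_{\max}\sum_{j\in obs(m)}|\beta_j^m| - C < 0$, so $g_m = 0$. With output layer $\sum_{m\in\{0,1\}^d}g_m(\widetilde X, M) - B$, exactly one summand survives for any realised mask and the output equals $\beta_0^M + \langle\beta^M, X_{obs(M)}\rangle = f^\star(Z)$. This is a one-hidden-layer network with $2^d$ ReLU units fed with $(X\odot(\mathbf 1 - M), M)$, as in the statement, so $f^\star$ lies in the hypothesis class $\mathcal F$.

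\emph{Step 2: consistency.} Let $\widehat f_n$ be the empirical risk minimiser over $\mathcal F$ on $n$ i.i.d.\ samples, with predictions clipped at a level $L\geq\|f^\star\|_\infty$ (finite since $\mathrm{supp}(X)$ is finite) via the truncation operator $T_L$ defined earlier. Because $f^\star\in\mathcal F$, the best achievable clipped risk $\inf_{f\in\mathcal F}R(T_L f)$ equals the Bayes risk $R(f^\star)$; moreover $\mathds V[Y|Z]$ and $\mathbb E[Y^2]$ are finite (again using finiteness of $\mathrm{supp}(X)$ together with $\varepsilon\sim\mathcal N(0,\sigma^2)$ and the noise decomposition of Proposition~\ref{prop:factorized_bayes_predictor}). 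Since $\mathcal F$ has a fixed architecture, its pseudo-dimension is finite (polynomial in the number of units and inputs), hence $\sup_{f\in\mathcal F}|R(T_L f) - \widehat R_n(T_L f)|\to 0$ almost surely by a uniform law of large numbers for bounded, finite-complexity classes. Combining this with $\widehat R_n(T_L\widehat f_n)\leq\widehat R_n(T_L f^\star)$ and the law of large numbers applied to $\widehat R_n(T_L f^\star)$ gives $R(T_L\widehat f_n)\to R(f^\star)$, i.e.\ Bayes consistency. (Alternatively, since $\mathrm{supp}(X)$ is finite and $M$ discrete, the risk on $\mathcal F$ is a finite-dimensional least-squares objective and one argues convergence of its minimisers directly.)

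\emph{Main obstacle.} The hard part is the weight construction of Step~1: a single ReLU unit must simultaneously behave as the identity on ``its'' linear submodel and annihilate on every other missingness pattern. This forces one to shift each submodel into the positive orthant with a uniform offset $B$ — available precisely because $\mathrm{supp}(X)$ is finite — and to suppress the spurious zero-imputed terms $\widetilde X_{obs(m)}$ that reappear when the mask disagrees with $m$, via a distance-scaled penalty $C\,h(M,m)$; here finiteness of the support of $X$ is essential, since without a uniform bound $X_{\max}$ no finite $C$ would gate the units correctly.
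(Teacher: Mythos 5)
Your proof is correct, and Step 1 takes a genuinely different route from the paper's construction. The paper ties each hidden unit's mask weights to its feature weights through per-unit ``learned imputation constants'' $G_{k,j}=W^{(M)}_{k,j}/W^{(X)}_{k,j}$, then derives a system of inequalities (via the quantities $I_k^{(1)}, I_k^{(2)}, K_k, b_k^{(1)}$) guaranteeing that each unit is activated by exactly one missingness pattern, and must treat separately the case where some feature weight of the Bayes predictor vanishes (their ``Case 2''). You instead exploit the fact that the mask weights are free parameters: gating each unit by $-C\,h(M,m)$, where the Hamming distance $h(M,m)$ is affine in $M$, together with the uniform offset $B$ (cancelled by the output bias), gives a shorter and cleaner exact-representation argument with no division by feature weights and no special case for zero coefficients; like the paper, it relies on the finite support of $X$ to choose the constants $B$ and $C$. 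What the paper's route buys is the interpretation driving its narrative -- each hidden unit as a linear regression with its own imputation vector -- while your route buys brevity and robustness of the construction. Your Step 2 (clipped ERM plus a uniform law of large numbers over a fixed finite-pseudo-dimension class, or the finite-input-space shortcut) is a reasonable sketch of the statistical half; note that the paper's own proof stops at exact representability and leaves this step implicit, so your proposal is, if anything, more explicit about what ``Bayes consistent'' requires.
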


The complete proof of Theorem \ref{th:MLP} is given in Appendix 
\ref{sec:proof_mlp}, but we provide here the main ideas. The activation
for each hidden unit is a linear function of a design matrix constructed
by imputing $X$ with zeros and concatenating with the mask $M$. Thus,
just like for the optimal imputation problem of Proposition
\ref{prop:opt_cst}, the weights of the linear function can be seen as
either regular linear regression weights for the observed variables or
learned imputation constants for the missing ones. In the context of a
MLP with $2^d$ hidden units, we have $2^d$ such linear functions, meaning
that each hidden unit is associated with one learned imputation vector.
It turns out, as shown in the proof, that it is possible to choose the
imputation vector of each hidden unit so that one hidden unit is always
activated by points with a given missing-values pattern $m$ but never by
points with another missing-values pattern $m^\prime \ne m$. As a result,
all points with a given missing-values pattern fall into their own affine region, and it is then possible to adjust the weights so that the slope and bias of each affine region equals those of the Bayes predictor.\\

The number of parameters of a MLP with one hidden layer and $2^d$ units
is $(d+1)2^{d+1}+1$. Compared to the expanded Bayes predictor which has
$(d+1)2^{d-1}$ parameters, this is roughly 4 times more. This comes from
the fact that the MLP does not directly estimate the slope and bias of a
linear model per missing-values pattern. First, for each affine region
associated to a missing-values pattern, it estimates a slope for all
variables, and not only for the observed ones. This doubles the number of
parameters to be estimated. Second, it also needs to estimate the
imputation constants for each hidden unit (or equivalently missing-values
pattern) which again doubles the number of parameters to be estimated. As
a result, the MLP should require more samples than the expanded Bayes
predictor to achieve the Bayes rate. However, as we discuss below the parametrization of the MLP provides a natural way to control the capacity of the model.  By contrast, there is no such easy and natural way to control the capacity of the expanded Bayes predictor.

\subsection{Trading off estimation and approximation error.}
The prediction function of a MLP with one hidden layer and $n_h$ hidden
units is a piecewise affine function with at most $\sum_{j=0}^d {n_h \choose j}$ regions \citep{Pascanu2013OntheNumber}. Thus,
choosing $n_h = d$, we obtain $2^d$ affine regions, so potentially one
per missing-value pattern. However, the slopes and biases of these affine
regions are not independent, since they are linear combinations of the
weights associated to each hidden unit. Yet, if the data-generating process
has more regularities, $2^d$ different slopes may not be necessary to
approximate it well. Varying the number of hidden units $n_h$ thus
explores an interesting tradeoff between model complexity --which comes
at the cost of estimation error-- and approximation error, to successfully address medium sample sizes problems.

\section{Empirical study}
\label{sec:emp_results}

We now run an empirical study to illustrate our theoretical results, but
also to explore how the different bias-variance trade-offs of the various
models introduced affect prediction errors depending on the
amount of data available. The code to reproduce the experiments in available on GitHub \footnote{https://github.com/marineLM/linear\_predictor\_missing}.

\subsection{Experimental settings}

\paragraph{Simulation models}
The data $(X, M)$ is generated according to three different models. Two of them are instances of Assumption \ref{ass:lineardistr} while the third one is a classical Missing Non At Random (MNAR) model \citep{little2002statistical}.
\begin{description}
\item[mixture 1] The first model assumes that the data is generated
according to Assumption \ref{ass:lineardistr} with only one Gaussian
component shared by all missing-values patterns. This boils down to a classical Missing Completely At Random (MCAR) setting, where $X \sim \mathcal N(X| \mu,  \Sigma)$ and missing values are introduced uniformly at random, independent of $X$.
\item[mixture 3] The second model assumes that the data is generated
according to Assumption \ref{ass:lineardistr} with three Gaussian components. Each
missing-values pattern is associated to one of the three Gaussian
components in such a way that each component is associated with
the same number of missing-values patterns.
\item[selfmasking] The last model assumes that the data is generated
according to a single Gaussian, and that missing values are introduced
according to an inverse probit model parametrized by $\lambda$ and $\mu_0$ as
$P(M=1|X_j) = \Phi(\lambda_j(X_j - \mu_0))$. This model allows
to increase the probability of introducing missing values when the
variable increases (or decreases), hence the denomination
\emph{selfmasking}. It is a classical instance of a Missing Non At Random
(MNAR) problem. Estimation in MNAR settings is notoriously difficult as
most approaches, such as EM, rely on ignoring --marginalizing-- the
unobserved data which then introduces biases. 
\end{description}

For the three models, covariances for the Gaussian distributions are obtained as $BB^\top + D$ where $B \in \mathbb R^{d \times \lfloor\frac{d}{2}\rfloor}$ is drawn from a standard normal distribution and $D$ is a diagonal matrix with small positive elements to make the covariance matrix full rank. This gives covariance matrices with some strong correlations.

For \emph{mixture 1} and \emph{mixture 3}, missing values are introduced
in such a way that each missing-values pattern in equiprobable. For \emph{selfmasking}, the parameters of the masking function are chosen so that the missing rate for each variable is 25\%.

The response $Y$ is generated by a linear combination of the input variables as in Equation~\ref{eq:predictor}. Note that to generate $Y$, we use the complete design matrix (without missing values). In these experiments, the noise $\varepsilon$ is set to 0 and the regression coefficients $\beta, \beta_0$ are chosen equal to $\beta_0 = 1$ and $\beta = (1, 1, \dots, 1)$.

% \begin{remark}
% As a reminder, here, although there is no noise in the
% model, the R2 score of the Bayes predictor is not one because the data is not fully available.
% \end{remark}

\paragraph{Estimation approaches}
For these three simulation scenarios, we compare four approaches:
\begin{description}
    \item[ConstantImputedLR:] optimal imputation method described in
    Proposition~\ref{prop:opt_cst} The regression is performed using ordinary least squares.
    
    \item[EMLR:] EM is used to fit a multivariate normal distribution for the $p+1$-dimensional random variable $(X_1, \dots, X_p, Y)$.
    % The estimated mean and covariance matrices are then used to compute the predictor $\mathbb E \left\{Y|X_{obs(M)}, M\right\}$, as obtained from the conditional Gaussian formula.
    Denoting by $\mu = (\mu_X, \mu_Y) \in \mathbb R^{p+1}$ the estimated mean and by $\Sigma$ the estimated covariance matrix (with blocks $\Sigma_{XX} \in \mathbb R^{p \times p}$, $\Sigma_{XY} \in \mathbb R^{p \times 1}$, and $\Sigma_{YY} \in \mathbb R$), the predictor used is:
    \begin{multline*}
        \mathbb E \left\{Y|X_{obs(M)}, M\right\} = \mu_Y + \\
        \Sigma_{Y, obs(M)} \Sigma_{obs(M)}^{-1} \left(X_{obs(M)} - \mu_{obs(M)}\right)
    \end{multline*}
    as can be obtained from the conditional Gaussian formula. Since EM directly estimates $\beta$, $\mu$ and $\Sigma$, it only has to estimate $(d+1)+d+\frac{d(d+1)}{2}=\frac{d(d+5)}{2}+1$ parameters, while ExpandedLR needs to estimate $2^{d-1} (d+2)$ parameters. However, this method is expected to perform well only in the \emph{mixture 1} setting.

    \item[MICELR:] Multivariate Imputation by Chained Equations (MICE) is a widely used and effective conditional imputation  method. We  used  the  \texttt{IterativeImputer} of scikit-learn which adapts MICE to out-of-sample imputation. 
    
    \item[ExpandedLR:] full linear model described in
Proposition~\ref{prop:condexp}. When the number of samples is small
compared to the number of missing-values patterns, it may happen that for a
given pattern, we have fewer samples than parameters to estimate. For this
reason, we used ridge regression on the expanded feature set, and the
regularization parameter was chosen by cross-validation over a small grid
of values ($10^{-3}$, $1$, $10^3$). The data is standardized before fitting the model.
    
    \item[MLP:] Multilayer perceptron with one hidden layer whose size is
varied between and 1 and $2^d$ hidden units. The input that is fed to the
MLP is $(X, M)$ where $X$ is imputed with zeros and $M$ is the mask.
Rectified Linear Units (ReLU) were used as activation functions. The MLP
was optimized with Adam, and a batch size of 200 samples. Weight decay
was applied and the regularization parameter chosen by cross-validation over a small grid of values ($10^{-1}, 10^{-2}, 10^{-4}$). The data is standardized before fitting the model.
\end{description}

\begin{figure*}[t!]
    \includegraphics[height=.33\linewidth]{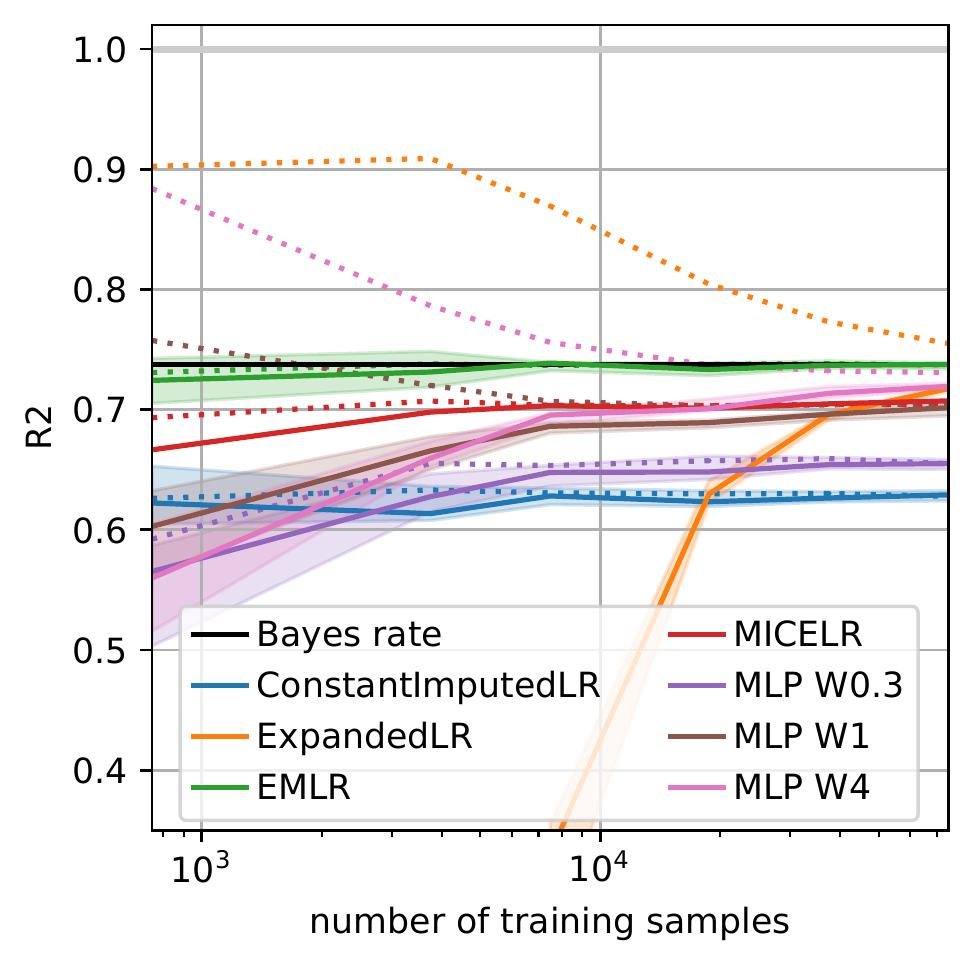}%
    \llap{\raisebox{.322\linewidth}{\sffamily\small{\bfseries Mixture 1}
	  (MCAR)\qquad\qquad}}%
    \llap{\raisebox{1ex}{\sffamily\bfseries a\hspace*{.3\linewidth}}}
    \hspace*{-1ex}%
    \hfill%
    \includegraphics[height=.33\linewidth]{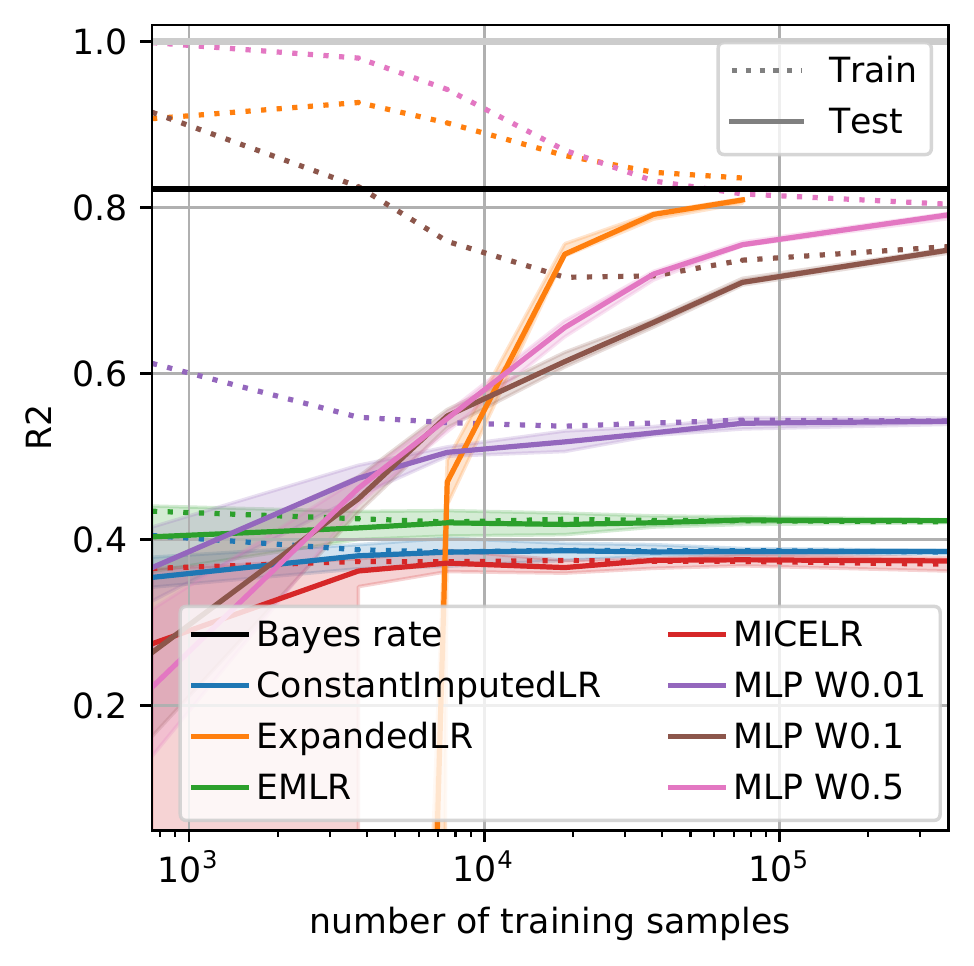}%
    \llap{\raisebox{.322\linewidth}{\sffamily\bfseries\small Mixture 3
	    \quad\qquad\qquad}}%
    \llap{\raisebox{1ex}{\sffamily\bfseries b\hspace*{.3\linewidth}}}
    \hspace*{-1ex}%
    \hfill%
    \includegraphics[height=.33\linewidth]{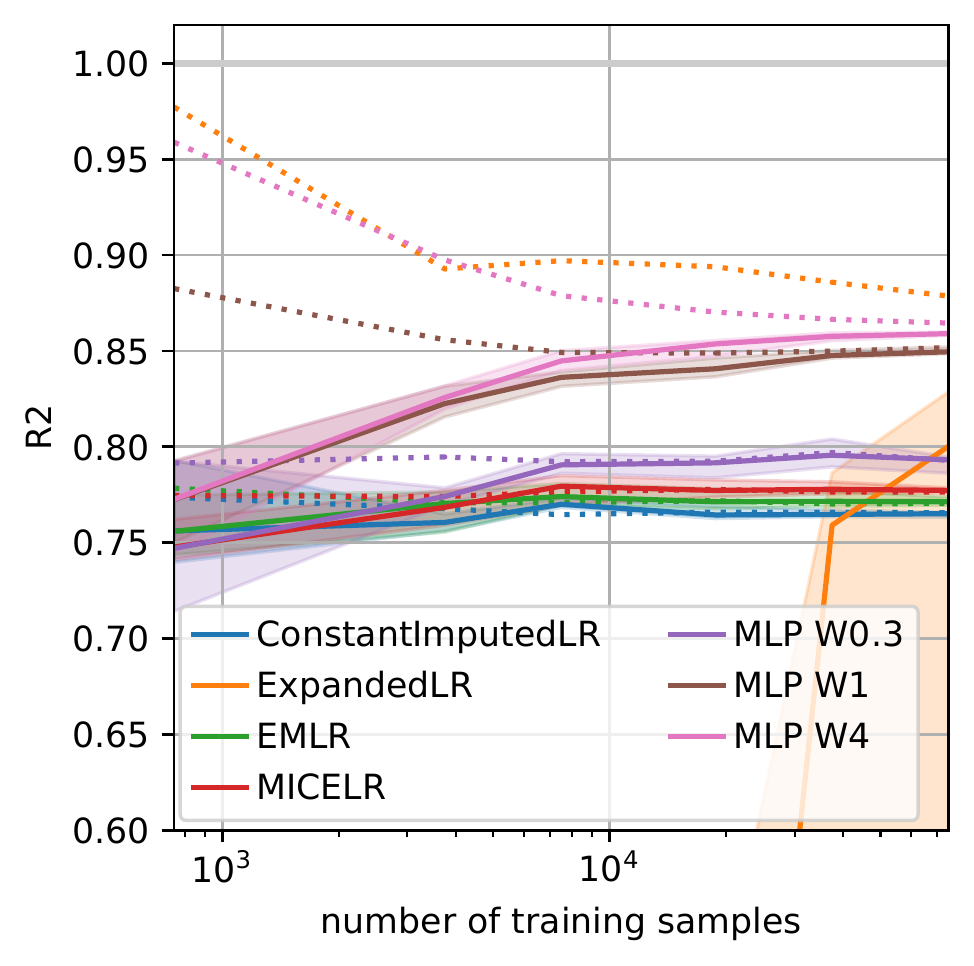}%
    \llap{\raisebox{.322\linewidth}{\sffamily\small{\bfseries Self-masked}
	(MNAR)\quad\qquad}}%
    \llap{\raisebox{1ex}{\sffamily\bfseries c\hspace*{.3\linewidth}}}
    \caption{\textbf{Learning curves} Train and test R2 scores
(respectively in dotted and in plain lines) as a
function of the number of training samples, for each data type.
Experiments were carried out in dimension $d=10$. The curves display the mean and 95\% confidence interval over 5 repetitions. The black horizontal line represents the Bayes rate (best achievable performance).
\autoref{fig:boxplots}, in the supp.\,mat.~gives a box plot 
at $n=75\,000$.}
    \label{fig:learning_curves}
\end{figure*}

\begin{figure*}[t!]
    \includegraphics[height=.29\linewidth]{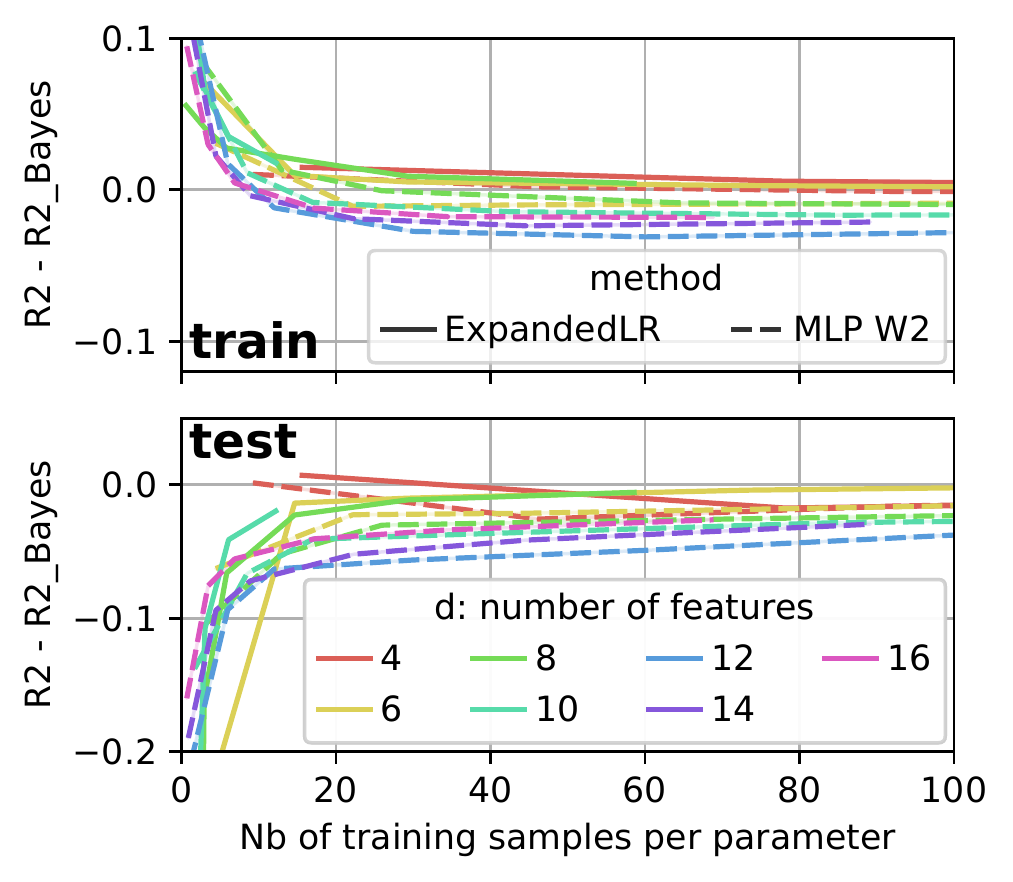}%
    \llap{\raisebox{.282\linewidth}{\sffamily\small{\bfseries Mixture 1}
	    (MCAR)\qquad\qquad}}%
    \llap{\raisebox{1ex}{\sffamily\bfseries a\hspace*{.3\linewidth}}}
    \hfill%
    \includegraphics[height=.29\linewidth]{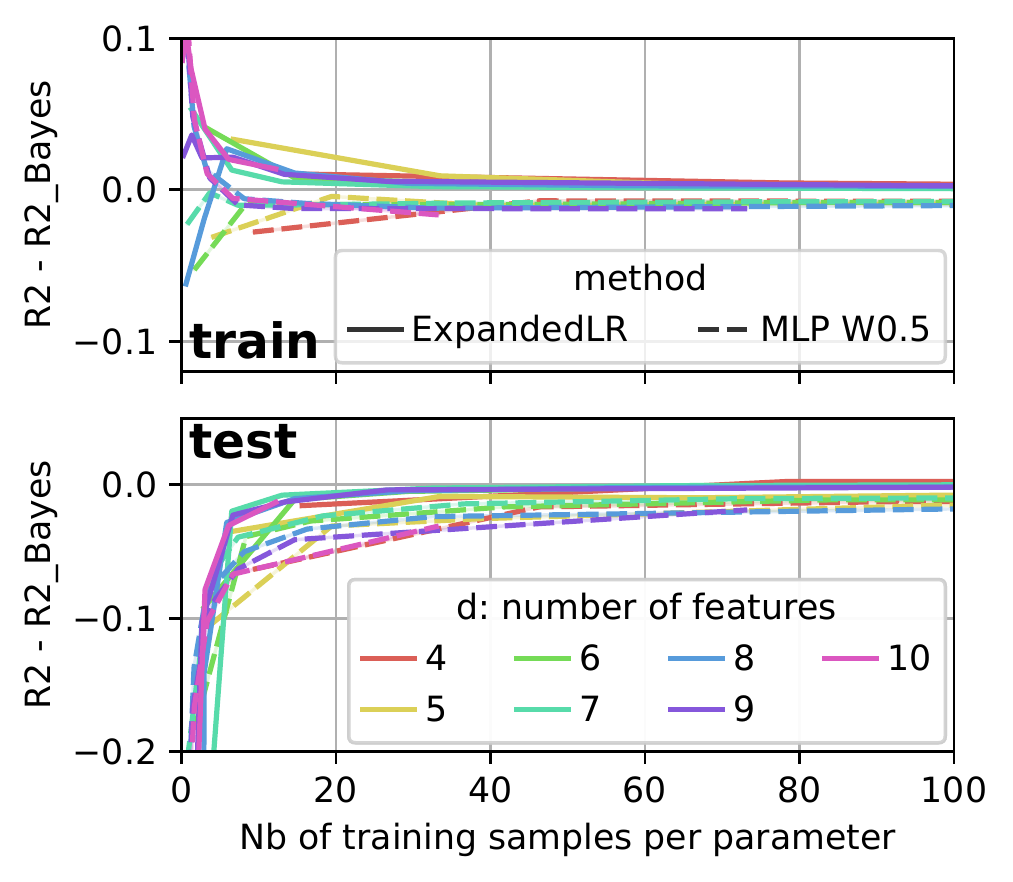}%
    \llap{\raisebox{.282\linewidth}{\sffamily\bfseries\small Mixture 3
	    \quad\qquad\qquad}}%
    \llap{\raisebox{1ex}{\sffamily\bfseries b\hspace*{.3\linewidth}}}
    \hfill%
    \includegraphics[height=.29\linewidth]{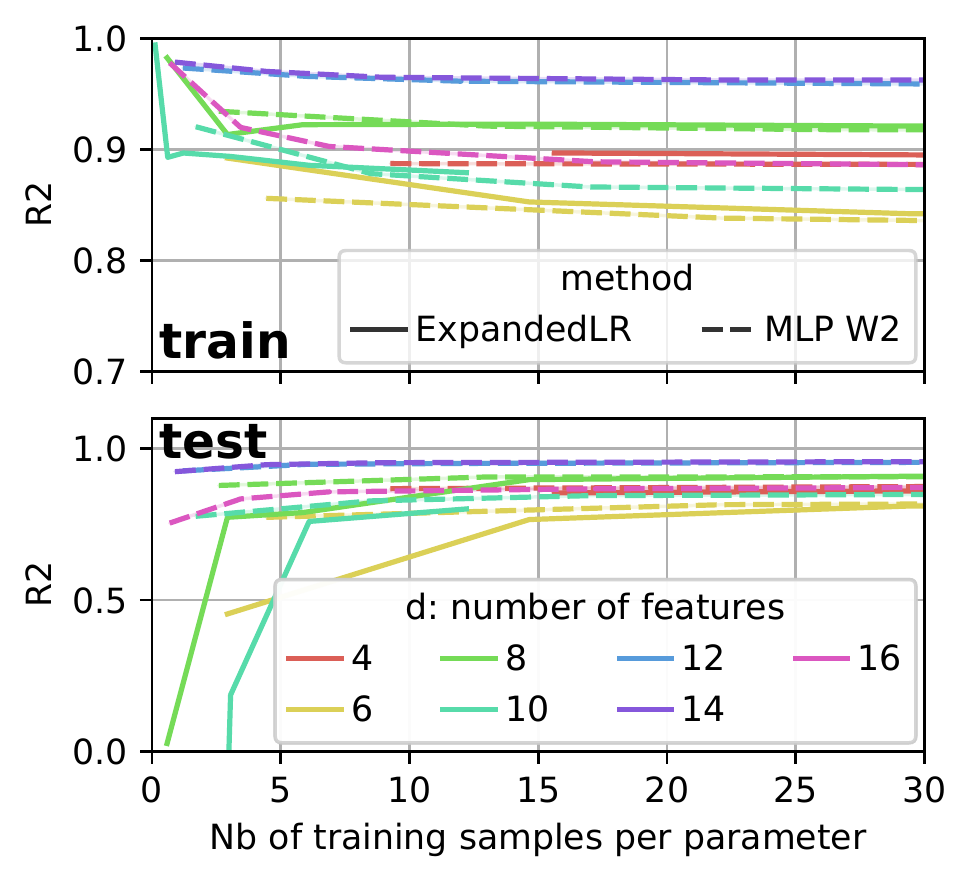}%
    \llap{\raisebox{.282\linewidth}{\sffamily\small{\bfseries
	Self-masked} (MNAR)\quad\qquad}}%
    \llap{\raisebox{1ex}{\sffamily\bfseries c\hspace*{.3\linewidth}}}
    \caption{\textbf{Learning curves varying the dimensionality} for the MLPs and ExpandedLR. For a given number of features $d$, the number of hidden units used is $2d$ for mixture 1 and self-masked and $0.5\times2^d$ for mixture 3.}
    \label{fig:scaling_n}
\end{figure*}

When referring to a MLP in the figures, we use the notation
\emph{MLP\_Wx} where x is a value indicating the number of units used.
For an experiment in dimension $d$, \emph{MLP\_Wx} refers to a MLP with
$x \times 2^d$ hidden units for \emph{mixture 3}, or a MLP with $x \times
d$ hidden units for \emph{MCAR} and \emph{MNAR}. The reason why we use
this notation is because to achieve the same performance level for
different dimensions, we must use a number of hidden units that is proportional to $2^d$
or to $d$ according to the data type (See Appendix
\ref{sec:scaling_q}). In the experiments, we test three different hidden layer sizes, to compare low, medium and high capacities. 

% The first simulation is a particular case of the running example, where $X$ and $M$ are independent.
% \begin{flalign*}
%     (M_1, \dots, M_d) &\sim \mathcal B(0.5)~\textrm{i.i.d.} \\
%     (X_1, \dots, X_d) &\sim \mathcal N(0, \Sigma) \textrm{ with }
%     \Sigma = \begin{pmatrix}
%         1 & & (\rho) \\
%         & \ddots & \\
%         (\rho) & & 1
%     \end{pmatrix} \\
%     Y &= 1 + \sum_{j=1}^d X_j.
% \end{flalign*}

All models are compared to the Bayes rate computed in Appendix
\ref{apx:bayesrisk} whenever possible, i.e. for \emph{mixture 1} and
\emph{mixture 3}. In all experiments, datasets are split into train and test set (75\% train and 25\% test) and the performance reported is the R2 score of the predictor on the test set (or the difference between the R2 score of the predictor and that of the Bayes predictor).

\subsection{Results}

\paragraph{\emph{Mixture 1} and \emph{Mixture 3}}
We first focus on the data types satisfying
Assumption~\ref{ass:lineardistr}, i.e, \emph{mixture 1} and \emph{mixture 3}, since the theory developed in this paper applies for these cases.

Figure~\ref{fig:learning_curves} ({\sffamily a} and {\sffamily b})
presents the learning curves for the
various methods, as the number of samples increases from $10^3$ to $10^5$ ($5\times10^5$ for \emph{mixture 3})
and the dimension is fixed to $d=10$. First of all, this figure
experimentally confirms that ExpandedLR and the MLP are Bayes consistent.
With enough samples, both methods achieve the best possible performance.
It also confirms that ExpandedLR cannot be used in the small $n$ large
$d$ regime. Indeed, between $10,000$ and $20,000$ samples are required
for ExpandedLR to reach acceptable performances. 

Overall, the learning curves (Figure~\ref{fig:learning_curves}) highlight
three sample size regimes. We have a small sample size regime ($n <
1,000)$ where EM is the best option. Indeed, EM is Bayes consistent for
\emph{mixture 1}. For \emph{mixture
3}, EM performs badly but is
not worse than the other methods in the small sample size regime. It is
slightly better than ConstantImputedLR which still remains a reasonable
option in this regime. MICELR follows the behavior of EM with slightly
worse performance.

For $n>30,000$ in \emph{MCAR} and $n>10,000$ in \emph{mixture 3}, we are in a large sample size regime where ExpandedLR is an excellent choice, with performances on par or better than those of the MLP. The observation of small and large sample regimes support the theoretical analysis of Section~5. The fact that ExpandedLR outperforms the MLP for a larger sample range in \emph{mixture 3} ($n>10,000$) compared to \emph{mixture 1} ($n>30,000$) is explained by the fact that, to reach a given performance, the MLP needs fewer parameters in \emph{mixture 1} than in \emph{mixture 3}, and thus fewer samples.

Finally, for $1,000 < n < 10,000$ or $1,000 < n < 30,000$, we have a last regime where the MLP should be the preferred option, since it outperforms both ConstantImputedLR and ExpandedLR. It shows that for medium size samples, it is possible to adapt the width of the hidden layer to reach a beneficial compromise between estimation and approximation error. This is particularly useful since many real datasets fall into this medium size sample regime.

Figure~\ref{fig:scaling_n} demonstrates that the sample complexity is
directly related to the number of parameters to learn. In particular, it shows that ExpandedLR requires around 15 samples per parameter to achieve the Bayes rate (whatever the dimension). Since in dimension 10, the number of parameters of this model is $2^{d-1}(d+2) = 6144$, we need $15\times 2^{d-1}(d+2) \approx 100,000$ samples to achieve the Bayes rate and around $10,00$ samples to achieve a reasonable performance. By comparison, the MLP requires as many samples per parameter as ExpandedLR but it doesn't need to have as many parameters as ExpandedLR to perform well. For example in figure~\ref{fig:learning_curves} (\emph{mixture 1}), \emph{MLP W1} has $2d(d+1)+1 = 111$ parameters, which suffice to obtain good performances.

\paragraph{\emph{Self-masked (MNAR)}}
Self-masked (MNAR) does not satisfy Assumption~\ref{ass:lineardistr}.
Therefore under this data generating scheme, the expression of the Bayes
predictor derived in earlier sections is not valid, and ExpandedLR and
the MLP with $2^d$ hidden units need not be Bayes consistent.
Self-masking, where the probability of missingness depends on the
unobserved value, is however a classical missing-values mechanism, and it
is useful to assess the performance of the different methods in this
setting.  As shown in the right panel of Figure~\ref{fig:learning_curves},
the MLP outperforms all other methods for this data type. This reflects
the versatility of this method, which can adapt to all data generating schemes. ExpandedLR caps at a performance close to that of ConstantImputedLR, which highlights the dependence of this method on Assumption~\ref{ass:lineardistr}.
Imputation, explicit with MICE or implicit with EM, performs poorly. This
is expected as
in MNAR it does not ground good prediction \citep{josse2019consistency}.

% The results presented in Figure~\ref{fig:MCAR} (left column) are aimed at assessing the importance of concatenating the mask for MLPs. Clearly, feeding the MLP with the incomplete data filled with zeros without concatenating the mask worsens the results. In this setting, even in dimension $8$, the MLPs perform worse than the full model and just slightly better than the naive model. An intuition for such behaviour is as follows. Feeding the incomplete data concatenated with the mask amounts to having an imputation constant per dimension and per hidden unit that can be learned. It seems that these imputation constants are useful to spread the data points to various regions of the space according to their missing data patterns, and therefore spread the data points to various affine regions of the MLP. This is illustrated in figure~\ref{fig:patterns}.

\section{Discussion and conclusion}

We have studied how to minimize a prediction error on data where the
response variable is a linear function of a set of partially observed features. Surprisingly, with these missing values, the
Bayes-optimal predictor is no longer a linear function of the data. Under
Gaussian mixtures assumptions we derive a closed-form expression of this Bayes
predictor and used it to introduce a consistent estimation procedure of
the prediction function. However, it entails a very high-dimensional
estimation. Indeed, our generalization bounds --to our
knowledge the first finite-sample theoretical results on prediction with missing values-- show that the 
sample complexity scales, in general,
 as $2^d$. We therefore study several approximations,
in the form of constant imputation or a multi-layer perceptron (MLP),
which can also be consistent given a sufficient number of hidden units. A key
benefit of the MLP is that tuning its number of hidden units enables
reducing model complexity and thus decreasing the number of samples
required to estimate the model. Our experiments indeed show that 
in the finite-sample regime, using a MLP with a reduced number of hidden
units leads to the best prediction. Importantly, the MLP adapts naturally
to the complexity of the data-generating mechanism: it needs fewer hidden
units and less data to predict well in a missing completely at random
situation.

Our approach departs strongly from classical missing-values approaches,
which rely on EM or imputation to model unobserved values. Rather, we
tackle the problem with an empirical risk minimization strategy. An
important benefit of this approach is that it is robust to the
missing-values mechanism, unlike most strategies which require
missing-at-random assumptions. Our theoretical and empirical results
are useful to guide the choice of learning architectures in the presence
of missing-values: with a powerful neural architecture,
imputing with zeros and adding features indicating missing-values 
suffices.

\paragraph{Acknowledgments} Funding via ANR-17-CE23-0018
DirtyData and DataIA MissingBigData grants.

\bibliography{biblio}

\begin{thebibliography}{}

\bibitem[Bishop, 2006]{bishop2006prml}
Bishop, C.~M. (2006).
\newblock {\em {Pattern recognition and machine learning}}.
\newblock springer Cambridge, UK.

\bibitem[Dempster et~al., 1977]{dempster1977maximum}
Dempster, A.~P., Laird, N.~M., and Rubin, D.~B. (1977).
\newblock {Maximum likelihood from incomplete data via the EM algorithm}.
\newblock {\em Journal of the royal statistical society. Series B
  (methodological)}, pages 1--38.

\bibitem[Gy{\"{o}}rfi et~al., 2006]{gyorfi2006distribution}
Gy{\"{o}}rfi, L., Kohler, M., Krzyzak, A., and Walk, H. (2006).
\newblock {\em {A distribution-free theory of nonparametric regression}}.
\newblock Springer Science {\&} Business Media.

\bibitem[Jones, 1996]{jones1996indicator}
Jones, M.~P. (1996).
\newblock {Indicator and stratification methods for missing explanatory
  variables in multiple linear regression}.
\newblock {\em Journal of the American statistical association},
  91(433):222--230.

\bibitem[Josse et~al., 2019a]{Rmistatic}
Josse, J., Mayer, I., Nicholas, T., and Nathalie., V. (2019a).
\newblock {R-miss-tastic: a unified platform for missing values methods and
  workflows}.
\newblock {\em arXiv preprint}.

\bibitem[Josse et~al., 2019b]{josse2019consistency}
Josse, J., Prost, N., Scornet, E., and Varoquaux, G. (2019b).
\newblock {On the consistency of supervised learning with missing values}.
\newblock {\em arXiv preprint arXiv:1902.06931}.

\bibitem[Kim and Ying, 2018]{kim2018data}
Kim, J.~K. and Ying, Z. (2018).
\newblock {\em {Data Missing Not at Random, special issue}}.
\newblock Statistica Sinica. Institute of Statistical Science, Academia Sinica.

\bibitem[Lakshminarayan et~al., 1996]{lakshminarayan1996imputation}
Lakshminarayan, K., Harp, S.~A., Goldman, R.~P., Samad, T., and Others (1996).
\newblock {Imputation of Missing Data Using Machine Learning Techniques.}
\newblock In {\em KDD}, pages 140--145.

\bibitem[Little, 1992]{little1992regression}
Little, R. J.~A. (1992).
\newblock {Regression with missing X's: a review}.
\newblock {\em Journal of the American Statistical Association},
  87(420):1227--1237.

\bibitem[Little, 1993]{little1993pattern}
Little, R. J.~A. (1993).
\newblock {Pattern-mixture models for multivariate incomplete data}.
\newblock {\em Journal of the American Statistical Association},
  88(421):125--134.

\bibitem[Little and Rubin, 2002]{little2002statistical}
Little, R. J.~A. and Rubin, D.~B. (2002).
\newblock {\em {Statistical analysis with missing data}}.
\newblock John Wiley {\&} Sons.

\bibitem[Liu et~al., 2016]{liu2016}
Liu, Y., Wang, Y., Feng, Y., and Wall, M.~M. (2016).
\newblock {Variable selection and prediction with incomplete high-dimensional
  data}.
\newblock {\em Ann. Appl. Stat.}, 10(1):418--450.

\bibitem[Marlin and Zemel, 2009]{marlin2009collaborative}
Marlin, B.~M. and Zemel, R.~S. (2009).
\newblock {Collaborative prediction and ranking with non-random missing data}.
\newblock In {\em Proceedings of the third ACM conference on Recommender
  systems}, pages 5--12. ACM.

\bibitem[Mohan and Pearl, 2019]{mohan:pea19-r473}
Mohan, K. and Pearl, J. (2019).
\newblock {Graphical Models for Processing Missing Data}.
\newblock Technical Report R-473-L, Department of Computer Science, University
  of California, Los Angeles, CA.

\bibitem[Pascanu et~al., 2013]{Pascanu2013OntheNumber}
Pascanu, R., Montufar, G., and Bengio, Y. (2013).
\newblock {On the number of response regions of deep feed forward networks with
  piece-wise linear activations}.
\newblock {\em arXiv:1312.6098}.

\bibitem[Pelckmans et~al., 2005]{pelckmans2005handling}
Pelckmans, K., {De Brabanter}, J., Suykens, J. A.~K., and {De Moor}, B. (2005).
\newblock {Handling missing values in support vector machine classifiers}.
\newblock {\em Neural Networks}, 18(5-6):684--692.

\bibitem[Rosenbaum and Rubin, 1984]{rosenbaum_rubin_JASA1984}
Rosenbaum, P.~R. and Rubin, D.~B. (1984).
\newblock {Reducing bias in observational studies using subclassification on
  the propensity score}.
\newblock {\em Journal of the American Statistical Association},
  79(387):516--524.

\bibitem[Rubin, 1976]{rubin1976inference}
Rubin, D.~B. (1976).
\newblock {Inference and missing data}.
\newblock {\em Biometrika}, 63(3):581--592.

\bibitem[Tsybakov, 2003]{tsybakov2003optimal}
Tsybakov, A.~B. (2003).
\newblock Optimal rates of aggregation.
\newblock In {\em Learning theory and kernel machines}, pages 303--313.
  Springer.

\bibitem[van Buuren, 2018]{vanbuuren_2018}
van Buuren, S. (2018).
\newblock {\em {Flexible Imputation of Missing Data}}.
\newblock Chapman and Hall/CRC, Boca Raton, FL.

\bibitem[Vapnik, 1992]{vapnik1992principles}
Vapnik, V. (1992).
\newblock {Principles of risk minimization for learning theory}.
\newblock In {\em Advances in neural information processing systems}, pages
  831--838.

\bibitem[Yoon et~al., 2018]{yoon2018gain}
Yoon, J., Jordon, J., and Schaar, M. (2018).
\newblock {GAIN: Missing Data Imputation using Generative Adversarial Nets}.
\newblock In {\em International Conference on Machine Learning}, pages
  5675--5684.

\bibitem[Zhang et~al., 2005]{zhang2005missing}
Zhang, S., Qin, Z., Ling, C.~X., and Sheng, S. (2005).
\newblock {" Missing is useful": missing values in cost-sensitive decision
  trees}.
\newblock {\em IEEE transactions on knowledge and data engineering},
  17(12):1689--1693.

\end{thebibliography}

\appendix

%\clearpage
\onecolumn

  \hsize\textwidth
  \linewidth\hsize \toptitlebar {\centering
  {\Large{\bfseries  Supplementary materials} -- \mytitle \par}}
 \bottomtitlebar \vskip 0.2in plus 1fil minus 0.1in

\section{General remarks and proof of Proposition~\ref{prop:opt_cst}}
\label{apx:notations}

\subsection{Notation: letter cases}
   One letter refers to one quantity, with different cases: $U$ is a random variable, while $u$ is a constant. $\mathbf U_n$ is a (random) sample, and $\mathbf u_n$ is a realisation of that sample. $u_j$ is the $j$-th coordinate of $u$, and if $\mathcal J$ is a set, $u_{\mathcal J}$ denotes the subvector with indices in $\mathcal J$.

%
%\subsection{Optimal constant imputation}
%    In a linear regression, imputing by an optimised constant $C$ is equivalent to imputing with zeros and concatenating the mask $M$ because both problems can be written as
%        \begin{equation*}
%            \underset{\beta_0, \beta_1, \beta_2}\min~
%            \mathbb E\left[\left(\beta_0+\beta_1 Z\odot (1-M) + \beta_2 M - Y\right)^2\right].
%        \end{equation*}
%    where $\beta_2$ is either the product of $\beta_1$ and $C$ or directly the coefficients on the columns of $M$.
%
\subsection{Gaussian vectors}

In assumption \ref{ass:lineardistr}, conditionnally to $M$, $X$ is Gaussian. It is useful to remind that in that case, for two subsets of indices $\mathcal I$ and $\mathcal J$, conditional distributions can be written as
\begin{equation}\label{eq:conddistr}
    X_\mathcal I | (X_\mathcal J, M) \sim \mathcal N(\mu_{\mathcal I|\mathcal J}^M, \Sigma_{\mathcal I|\mathcal J}^M)
\end{equation}
with
\begin{flalign*}
    \left\{
        \begin{array}{rl}
            \mu_{\mathcal I|\mathcal J}^M &= \mu_\mathcal I^M+\Sigma_{\mathcal I\mathcal J}^M(\Sigma_{\mathcal J\mathcal J}^M)^{-1}(X_\mathcal J-\mu_\mathcal J^M)\\
            \Sigma_{\mathcal I|\mathcal J}^M &= \Sigma_{\mathcal I\mathcal I}^M-\Sigma_{\mathcal I\mathcal J}^M(\Sigma_{\mathcal J\mathcal J}^M)^{-1}(\Sigma_{\mathcal I\mathcal J}^M)^\mathsf{T}.
        \end{array}
    \right.
\end{flalign*}    
In particular, for all pattern $m$, for all $k\in mis(m)$,
\begin{flalign*}
    \mathbb E\left[X_k~\middle|~M=m, X_{obs(m)}\right]  = \mu_k^m + \Sigma_{k,obs(m)}^m\left(\Sigma_{obs(m)}^m\right)^{-1}\left(X_{obs(m)}-\mu_{obs(m)}^m\right).
\end{flalign*}

\subsection{Proof of Proposition~\ref{prop:opt_cst}}

Solving a linear regression problem with  optimal imputation constants $c^{\star} = (c_j^{\star})_{j \in \llbracket 1, d \rrbracket}$ can be written as 
\begin{align*}
 & (\beta^{\star}, c^{\star}) \in \textrm{argmin}_{ \beta, c \in \mathds{R}^d} 
 \mathds{E} \left[ \left(Y - \left(\beta_0 + \sum_{j=1}^d \beta_j \left(X_j \mathds{1}_{M_j=0} + c_j \mathds{1}_{M_j =1}\right) \right) \right)^2 \right] \\
 \Longleftrightarrow & (\beta^{\star}, c^{\star}) \in \textrm{argmin}_{ \beta, c  \in \mathds{R}^d} 
 \mathds{E} \left[ \left(Y - \left(\beta_0 + \sum_{j=1}^d \beta_j X_j \mathds{1}_{M_j=0} + \sum_{j=1}^d \beta_j c_j \mathds{1}_{M_j =1}\right)  \right)^2 \right],
 \end{align*}
where the terms $X_j \mathds{1}_{M_j=0}$ is equal to the variable $X_j$, imputed by zero if $X_j$ is missing and $\beta_j c_j$ is the linear coefficient associated to the variable $\mathds{1}_{M_j =1}$. Therefore, the linear regression coefficient $\beta^{\star} = (\beta_j^{\star})_{j \in \llbracket 1, d \rrbracket}$ and the optimal imputation constants $c^{\star} = (c_j^{\star})_{j \in \llbracket 1, d \rrbracket}$ can be solved via the linear regression problem with inputs $(X_j)_{j \in \llbracket 1, d \rrbracket},(\mathds{1}_{M_j=1})_{j \in \llbracket 1, d \rrbracket}$ where the first set of $d$ coefficients are the $(\beta_j^{\star})_{j \in \llbracket 1, d \rrbracket}$ and the second set of coefficients are equal to $(\beta_j^{\star} c_j^{\star})_{j \in \llbracket 1, d \rrbracket}$.

\section{Bayes estimate and Bayes risk}\label{apx:proofs}

\begin{proof}[Proof of Proposition \ref{prop:condexp}]
\begin{align*}
    \mathbb E[Y | Z]
    & = \mathbb E[\beta_0+\beta^\mathsf{T} X~|~Z] \\
    & = \mathbb E[\beta_0+\beta^\mathsf{T} X~|~M, X_{obs(M)}] \\
    & = \beta_0 + \beta_{obs(M)}^\mathsf{T} X_{obs(M)} + \beta_{mis(M)}^\mathsf{T}~\mathbb E[X_{mis(M)}~|~M, X_{obs(M)}] 
\end{align*}
where, by Equation~\ref{eq:conddistr},
\begin{align*}
    &\mathbb E[X_{mis(M)}~|~M, X_{obs(M)}] = \mu_{mis(M)}^M + \Sigma_{mis(M), obs(M)}^M\left(\Sigma_{obs(M)}^M\right)^{-1}\left(X_{obs(M)}-\mu_{obs(M)}^M\right).
\end{align*}
Hence,
\begin{align*}
    \mathbb E[Y | Z]
    & = \beta_0 + \beta_{mis(M)}^\mathsf{T}\left(\mu_{mis(M)}^M-\Sigma_{mis(M), obs(M)}^M\left(\Sigma_{obs(M)}^M\right)^{-1} \mu_{obs(M)}^M\right) & \\
    &\quad + \left(\beta_{obs(M)}^\mathsf{T}
    + \beta_{mis(M)}^\mathsf{T}~\Sigma_{mis(M), obs(M)}^M\left(\Sigma_{obs(M)}^M\right)^{-1}\right)X_{obs(M)} \\
    &= \delta_{obs(M), 0}^M + \left(\delta_{obs(M)}^M\right)^\mathsf{T} X_{obs(M)},
\end{align*}
by setting 
\begin{align*}
 \delta_{obs(M), 0}^M & = \beta_0 + \beta_{mis(M)}^\mathsf{T}\left(\mu_{mis(M)}^M-\Sigma_{mis(M), obs(M)}^M\left(\Sigma_{obs(M)}^M\right)^{-1} \mu_{obs(M)}^M\right) \\
 \delta_{obs(M)}^M & = \beta_{obs(M)} + \beta_{mis(M)}^\mathsf{T}~\Sigma_{mis(M), obs(M)}^M\left(\Sigma_{obs(M)}^M\right)^{-1}.
\end{align*}
Therefore, $E[Y|Z]$ takes the form, 
\begin{align*}
    \mathbb E[Y|Z]
    & = \sum_{m\in\{0, 1\}^d} \left[ \delta_{obs(m), 0}^m + \left(\delta_{obs(m)}^m\right)^\mathsf{T} X_{obs(m)} \right] \mathds 1_{M=m}\\
    &= \langle  W, \delta \rangle.
\end{align*}
\end{proof}

\begin{proof}[Proof of Proposition~\ref{prop:factorized_bayes_predictor}]
The polynomial expression is given by
\begin{align*}
    \mathbb E[Y|Z]
    &= \sum_{m\in\{0,1\}^d} 
    \mathds 1_{M=m} 
    \times\left(\delta_0^m + \sum_{j=1}^d \mathds 1_{j\in obs(m)} \delta_j^m X_j \right) \\
    &= \sum_{m\in\{0,1\}^d} 
    \prod_{k=1}^d  \left(1-(M_k-m_k)^2\right)
    \times\left(\delta_0^m + \sum_{j=1}^d (1-M_j) \delta_j^m X_j \right) \\
    &= \sum_{m\in\{0,1\}^d} 
    \prod_{k=1}^d  \left(1-M_k-m_k+2M_km_k\right)
    \times\left(\delta_0^m + \sum_{j=1}^d (1-M_j) \delta_j^m X_j \right) \\
    &= \sum_{m\in\{0,1\}^d} 
    \sum_{
        \scriptsize{\begin{array}{c}
            \mathcal S_1\sqcup\mathcal S_2\sqcup\mathcal S_3\\
            \sqcup\mathcal S_4=\llbracket 1,d\rrbracket
        \end{array}}}
    (-1)^{|\mathcal S_2|+|\mathcal S_3|}2^{|\mathcal S_4|}
    \prod_{
        \scriptsize{\begin{array}{c}
            k_3\in\mathcal S_3,\\
            k_4\in\mathcal S_4
        \end{array}}}
    m_{k_3}m_{k_4}
    \prod_{
        \scriptsize{\begin{array}{c}
            k_2\in\mathcal S_2,\\
            k_4\in\mathcal S_4
        \end{array}}}
    M_{k_2}M_{k_4}
    \times\left(\delta_0^m + \sum_{j=1}^d (1-M_j) \delta_j^m X_j \right) \\
    &\text{(where $\mathcal S_1\sqcup\mathcal S_2\sqcup\mathcal S_3\sqcup\mathcal S_4$ is a partition of $\llbracket 1, d\rrbracket$)}\\
    &=
    \sum_{
        \scriptsize{\begin{array}{c}
            \mathcal S_1\sqcup\mathcal S_2\sqcup\mathcal S_3\\
            \sqcup\mathcal S_4=\llbracket 1,d\rrbracket
        \end{array}}}
    (-1)^{|\mathcal S_2|+|\mathcal S_3|}2^{|\mathcal S_4|}
    \sum_{
        \scriptsize{\begin{array}{c}
            m\in\{0, 1\}^d \\
            obs(m) \subset\mathcal S_3^c \cap\mathcal S_4^c
        \end{array}}}
    1\times\prod_{k_2\in\mathcal S_2,k_4\in\mathcal S_4} M_{k_2}M_{k_4}
    \times\left(\delta_0^m + \sum_{j=1}^d (1-M_j) \delta_j^m X_j \right) \\
    &=
    \sum_{
        \scriptsize{\begin{array}{c}
            \mathcal S_1\sqcup\mathcal S_2\sqcup\mathcal S_3\\
            \sqcup\mathcal S_4=\llbracket 1,d\rrbracket
        \end{array}}}
    \prod_{k_2\in\mathcal S_2,k_4\in\mathcal S_4} M_{k_2}M_{k_4}
    \times\left(
    (-1)^{|\mathcal S_2|+|\mathcal S_3|}2^{|\mathcal S_4|}
    \sum_{
        \scriptsize{\begin{array}{c}
            m\in\{0, 1\}^d \\
            obs(m) \subset\mathcal S_3^c \cap\mathcal S_4^c
        \end{array}}}
    \left(\delta_0^m + \sum_{j=1}^d (1-M_j) \delta_j^m X_j \right)\right) \\
    &=
    \sum_{
        \scriptsize{\begin{array}{c}
            \mathcal S_1\sqcup\mathcal S_2\sqcup\mathcal S_3\\
            \sqcup\mathcal S_4=\llbracket 1,d\rrbracket
        \end{array}}}
    \prod_{k_2\in\mathcal S_2,k_4\in\mathcal S_4} M_{k_2}M_{k_4}
    \times\left(
    \zeta_0^{\mathcal S_2,\mathcal S_3,\mathcal S_4} + \sum_{j=1}^d (1-M_j) \zeta_j^{\mathcal S_2,\mathcal S_3,\mathcal S_4} X_j \right) \\
    &=
    \sum_{
        \scriptsize{\begin{array}{c}
            \mathcal S_2\sqcup\mathcal S_4\subset\llbracket 1,d\rrbracket
        \end{array}}}
    \prod_{k_2\in\mathcal S_2,k_4\in\mathcal S_4} M_{k_2}M_{k_4}
    \times
    \sum_{
        \scriptsize{\begin{array}{c}
            \mathcal S_1\sqcup\mathcal S_3=(\mathcal S_2\sqcup\mathcal S_4)^c
        \end{array}}}
    \left(
    \zeta_0^{\mathcal S_2,\mathcal S_3,\mathcal S_4} + \sum_{j=1}^d (1-M_j) \zeta_j^{\mathcal S_2,\mathcal S_3,\mathcal S_4} X_j \right) \\
    &\text{(reindexing $\mathcal S=\mathcal S_2\sqcup\mathcal S_4$)}\\
    &=
    \sum_{
        \scriptsize{\begin{array}{c}
            \mathcal S\subset\llbracket 1,d\rrbracket
        \end{array}}}
    \prod_{k\in\mathcal S} M_{k}
    \times
    \left(
    \zeta_0^{\mathcal S} + \sum_{j=1}^d (1-M_j) \zeta_j^{\mathcal S} X_j \right).
\end{align*}

Finally, the expression of ${\rm noise}(Z)$ results from
\begin{equation*}
    X_{mis(M)} | X_{obs(M)}, M=m \sim \mathcal N(\mu_M, T_M)
\end{equation*}
where the conditional expectation $\mu_M$ has been given above and
\begin{equation*}
    T_M = \Sigma_{mis(M)}-\Sigma_{mis(M),obs(M)}\left(\Sigma_{obs(M)}\right)^{-1}\Sigma_{obs(M),mis(M)}.
\end{equation*}

\end{proof}

\section{Bayes Risk}\label{apx:bayesrisk}

\begin{proposition}\label{prop:bayesrisk}
    The Bayes risk associated to the Bayes estimator of proposition \ref{prop:condexp} is given by 
\begin{equation*}
    \mathbb E\left[\left(Y - f^\star(Z) \right)^2\right]
    = \sigma^2 + \sum_{m\in\{0, 1\}^d} \mathbb P(M=m) \Lambda_m,
\end{equation*}
with 
\begin{align*}
\Lambda_m & = \left(\gamma_{obs(m)}^m\right)^\mathsf{T}\Sigma_{obs(m)}^m\gamma_{obs(m)}^m + \beta_{mis(m)}^\mathsf{T}\Sigma_{mis(m)}^m\beta_{mis(m)} 
- 2\left(\gamma_{obs(m)}^m\right)^\mathsf{T}\Sigma_{obs(m),mis(m)}^m\beta_{mis(m)} \\
& \quad + \left(\gamma_{obs(m),0}^m\right)^2 + \left(\left(\gamma_{obs(m)}^m\right)^\mathsf{T} \mu_{obs(m)}^m\right)^2  + \left(\beta_{mis(m)}^\mathsf{T} \mu_{mis(m)}^m\right)^2  + 2\gamma_{obs(m),0}^m\left(\gamma_{obs(m)}^m\right)^\mathsf{T}\mu_{obs(m)}^m \\
& \quad - 2\gamma_{obs(m),0}^m\beta_{mis(m)}^\mathsf{T}\mu_{mis(m)}^m 
- 2\left(\gamma_{obs(m)}^m\right)^\mathsf{T} \mu_{obs(m)}^m \beta_{mis(m)}^\mathsf{T}\mu_{mis(m)}^m,
\end{align*}
where $\gamma_{obs(m)}^m$ is a function of the regression coefficients on the missing variables and the means and covariances given $M$.
\end{proposition}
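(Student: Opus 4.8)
The plan is to use the tower rule to reduce $\mathbb E[(Y-f^\star(Z))^2]$ to a computation per missing-values pattern, and then on each pattern to exploit the conditional Gaussianity of $X$ given $M$ provided by Assumption~\ref{ass:lineardistr}.

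First I would write $Y-f^\star(Z)$ in closed form. By Model~(\ref{eq:predictor}) one has $Y = \beta_0 + \beta_{obs(M)}^\mathsf{T} X_{obs(M)} + \beta_{mis(M)}^\mathsf{T} X_{mis(M)} + \varepsilon$, while by Proposition~\ref{prop:condexp} and its proof $f^\star(Z) = \delta_{obs(M),0}^M + (\delta_{obs(M)}^M)^\mathsf{T} X_{obs(M)}$. Introducing the deviations $\gamma_{obs(m),0}^m := \delta_{obs(m),0}^m - \beta_0$ and $\gamma_{obs(m)}^m := \delta_{obs(m)}^m - \beta_{obs(m)} = \beta_{mis(m)}^\mathsf{T}\Sigma_{mis(m),obs(m)}^m(\Sigma_{obs(m)}^m)^{-1}$ — indeed a function of the regression coefficients on the missing variables and of the means and covariances given $M=m$ — one gets, on the event $\{M=m\}$,
\[
 Y - f^\star(Z) = \varepsilon \;-\; \gamma_{obs(m),0}^m \;-\; (\gamma_{obs(m)}^m)^\mathsf{T}X_{obs(m)} \;+\; \beta_{mis(m)}^\mathsf{T}X_{mis(m)}.
\]

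Next I would condition on $\{M=m\}$ and square. Since $\varepsilon\sim\mathcal N(0,\sigma^2)$ is independent of $(X,M)$, it contributes $\sigma^2$ and no cross term, so it remains to evaluate $\mathbb E[\xi_m^2\mid M=m]$ where $\xi_m := -\gamma_{obs(m),0}^m - (\gamma_{obs(m)}^m)^\mathsf{T}X_{obs(m)} + \beta_{mis(m)}^\mathsf{T}X_{mis(m)}$. Under Assumption~\ref{ass:lineardistr}, conditionally on $\{M=m\}$ the vector $X$ is Gaussian, hence $(X_{obs(m)},X_{mis(m)})$ is jointly Gaussian with means $\mu_{obs(m)}^m,\mu_{mis(m)}^m$ and (cross-)covariances $\Sigma_{obs(m)}^m,\Sigma_{mis(m)}^m,\Sigma_{obs(m),mis(m)}^m$. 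Expanding $\xi_m^2$ into its monomials and replacing each quadratic term by $\mathbb E[X_iX_j\mid M=m]=\Sigma_{ij}^m+\mu_i^m\mu_j^m$ and each linear term by $\mu_i^m$, then collecting, yields exactly $\Lambda_m$. As a sanity check, the formula for $\delta_{obs(m),0}^m$ gives the identity $\gamma_{obs(m),0}^m = \beta_{mis(m)}^\mathsf{T}\mu_{mis(m)}^m - (\gamma_{obs(m)}^m)^\mathsf{T}\mu_{obs(m)}^m$, i.e. $\mathbb E[\xi_m\mid M=m]=0$, so the last six terms of $\Lambda_m$ form a perfect square that vanishes and $\Lambda_m$ collapses to the conditional variance $\beta_{mis(m)}^\mathsf{T} T_m \beta_{mis(m)}$ with $T_m$ as in Proposition~\ref{prop:factorized_bayes_predictor}; I would nevertheless present the expanded form to match the statement.

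Finally, the tower rule gives $\mathbb E[(Y-f^\star(Z))^2] = \sum_{m\in\{0,1\}^d}\mathbb P(M=m)\,\mathbb E[(Y-f^\star(Z))^2\mid M=m] = \sigma^2 + \sum_{m\in\{0,1\}^d}\mathbb P(M=m)\,\Lambda_m$, which is the claim. There is no genuine obstacle: the argument is pure bookkeeping, and the only points requiring care are keeping track of the signs introduced by the substitution $\delta^m-\beta$ and respecting the block structure — observed versus missing coordinates, together with the cross term $\Sigma_{obs(m),mis(m)}^m$ — when expanding the quadratic form of the conditionally Gaussian vector.
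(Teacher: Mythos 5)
Your proposal is correct and takes essentially the same route as the paper's proof: condition on the pattern $\{M=m\}$, introduce the deviations $\gamma^m_{obs(m),0}=\delta^m_{obs(m),0}-\beta_0$ and $\gamma^m_{obs(m)}=\delta^m_{obs(m)}-\beta_{obs(m)}$, and evaluate the conditional Gaussian second moments, which the paper organizes as $\sigma^2$ plus conditional variance plus squared conditional mean instead of your monomial expansion. Your extra sanity check is also right: since $\gamma^m_{obs(m),0}+\left(\gamma^m_{obs(m)}\right)^\mathsf{T}\mu^m_{obs(m)}=\beta_{mis(m)}^\mathsf{T}\mu^m_{mis(m)}$, the six mean terms of $\Lambda_m$ form a vanishing square and $\Lambda_m$ indeed collapses to $\beta_{mis(m)}^\mathsf{T}T_m\beta_{mis(m)}$, a simplification the paper does not spell out.
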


\begin{proof}[Proof of Proposition \ref{prop:bayesrisk}]
\begin{align*}
    &\mathbb E\left[\left(\mathbb E[Y|Z]-Y\right)^2\right] &\\
    &= \mathbb E\left[\left(\delta_{obs(M), 0}^M + \left(\delta_{obs(M)}^M\right)^\mathsf{T} X_{obs(M)} - \beta_0 - \beta^\mathsf{T} X - \varepsilon\right)^2\right] &\\
    &= \mathbb E\left[\left(\delta_{obs(M), 0}^M - \beta_0 + \left(\delta_{obs(M)}^M - \beta_{obs(M)}\right)^\mathsf{T} X_{obs(M)} - \beta_{mis(M)}^\mathsf{T} X_{mis(M)} - \varepsilon\right)^2\right].
\end{align*}
By posing
\begin{align*}
    \left\{
        \begin{array}{rll}
            \gamma_{obs(M),0}^M &= \delta_{obs(M), 0}^M - \beta_0 &= \beta_{mis(M)}^\mathsf{T}\left(\mu_{mis(M)}^M-\Sigma_{mis(M), obs(M)}^M\left(\Sigma_{obs(M)}^M\right)^{-1} \mu_{obs(M)}^M\right) \\
            \gamma_{obs(M)}^M &= \delta_{obs(M)}^M - \beta_{obs(M)} &= \beta_{mis(M)}^\mathsf{T}~\Sigma_{mis(M), obs(M)}^M\left(\Sigma_{obs(M)}^M\right)^{-1},
        \end{array}
    \right. 
\end{align*}
one has
\begin{align*}
    &\mathbb E\left[\left(\mathbb E[Y|Z]-Y\right)^2\right] &\\
    &= \mathbb E\left[\left(\gamma_{obs(M), 0}^M + \left(\gamma_{obs(M)}^M\right)^\mathsf{T} X_{obs(M)} - \beta_{mis(M)}^\mathsf{T} X_{mis(M)} - \varepsilon\right)^2\right] \\
    &= \sum_{m\in:\{0, 1\}^d} \mathbb P(M=m) \cdot \mathbb E\left[\left(\gamma_{obs(m), 0}^m + \left(\gamma_{obs(m)}^m\right)^\mathsf{T} X_{obs(m)} - \beta_{mis(m)}^\mathsf{T} X_{mis(m)} - \varepsilon\right)^2 ~\middle|~ M=m\right] \\
    &= \sum_{m\in:\{0, 1\}^d} \mathbb P(M=m) \cdot \Bigg[
    \sigma^2 + \mathrm{Var}\left(\left(\gamma_{obs(m)}^m\right)^\mathsf{T} X_{obs(m)} - \beta_{mis(m)}^\mathsf{T} X_{mis(m)} ~\middle|~ M=m\right) \\
    & \qquad + \left(\gamma_{obs(m),0}^m + \left(\gamma_{obs(m)}^m\right)^\mathsf{T} \mathbb E\left[X_{obs(m)}\middle|M=m\right] - \beta_{mis(m)}^\mathsf{T} \mathbb E\left[X_{mis(m)}\middle|M=m\right]\right)^2
    \Bigg] \\
    &= \sigma^2 + \sum_{m\in:\{0, 1\}^d} \mathbb P(M=m) ~ \Lambda_m
\end{align*}
\end{proof}

\section{Proof of Theorem~\ref{th:full_model_fin_sample} and Theorem~\ref{th:approx_model_fin_sample}}
\label{sec:proof_bound}

Theorem 11.3 in \cite{gyorfi2006distribution} allows us to bound the risk of the linear estimator, even in the misspecified case. We recall it here for the sake of completeness. 

\begin{theorem}[Theorem 11.3 in \cite{gyorfi2006distribution}]
Assume that 
\begin{align*}
    Y = f^{\star}(X) + \varepsilon,
\end{align*}
where $\|f^{\star}\|_{\infty} < L$ and $\mathds{V}[\varepsilon|X] < \sigma^2$
almost surely. Let $\mathcal{F}$ be the space of linear function $f:
[-1,1]^d \to \mathds{R}$. Then, letting $T_L {f}_n$ be 
the linear regression $f_n$ estimated via OLS, clipped at $\pm L$, we have
\begin{align*}
    \mathds{E}[(T_L {f}_n(X) - f^{\star}(X))^2] \leq c \max\{\sigma^2, L^2\} \frac{d (1+\log n)}{n} + 8 \inf_{f \in \mathcal{F}} \mathds{E}[ (f(X) - f^{\star}(X))^2],
\end{align*}
for some universal constant $c$.
\end{theorem}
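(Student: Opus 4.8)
The plan is to reduce the claim to a standard oracle inequality for least squares over a class of uniformly bounded functions of small pseudo-dimension, following the empirical-process machinery of Chapters~11--13 of \cite{gyorfi2006distribution}. Write $\|g\|_n^2 = \frac1n\sum_{i=1}^n g(X_i)^2$ and $\|g\|^2 = \mathds{E}[g(X)^2]$, and abbreviate $\hat f = T_L f_n$. Since $f_n$ minimizes the empirical squared error over linear functions and $|f^\star|\le L$ (so that clipping never increases that error), for every fixed $f\in\mathcal F$ we have $\|\hat f - Y\|_n^2 \le \|f - Y\|_n^2$; substituting $Y_i = f^\star(X_i)+\varepsilon_i$ and rearranging gives the basic inequality
\[
\|\hat f - f^\star\|_n^2 \;\le\; \|f - f^\star\|_n^2 \;+\; \frac2n\sum_{i=1}^n \varepsilon_i\,\bigl(\hat f(X_i) - f(X_i)\bigr).
\]
Choosing $f = f^o \in \mathrm{argmin}_{f\in\mathcal F}\|f-f^\star\|^2$, the first term has expectation $\inf_{f\in\mathcal F}\mathds{E}[(f(X)-f^\star(X))^2]$, and everything hinges on controlling the noise cross-term and converting the empirical norm on the left into the population norm.

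The core step is a uniform deviation bound over $g = \hat f - f^o$, which lies in the class $\mathcal G = \{h - f^o : h\in\mathcal F,\ \|h\|_\infty\le L\}$ of uniformly bounded functions. Because the linear class $\mathcal F$ restricted to $[-1,1]^d$ has pseudo-dimension at most $d+1$, the $L_2$ covering numbers of $\mathcal G$ grow like a polynomial of degree $\mathcal{O}(d)$ in $1/\epsilon$. After truncating the noise at level of order $\sqrt{\max\{\sigma^2,L^2\}\log n}$ and bounding the discarded tail using only $\mathds{V}[\varepsilon\,|\,X]\le\sigma^2$, a Bernstein-type tail inequality together with chaining and a peeling device over the radius $\|g\|$ yields, on an event of probability $1-\mathcal{O}(1/n)$ (and in the corresponding integrated form),
\[
\frac2n\sum_{i=1}^n \varepsilon_i\, g(X_i) \;\le\; \tfrac12\|g\|^2 \;+\; c_1\,\max\{\sigma^2,L^2\}\,\frac{d(1+\log n)}{n},
\qquad
\bigl|\,\|g\|_n^2 - \|g\|^2\,\bigr| \;\le\; \tfrac14\|g\|^2 \;+\; c_1\,\max\{\sigma^2,L^2\}\,\frac{d(1+\log n)}{n}.
\]
These are precisely the uniform deviation estimates of Chapter~11 of \cite{gyorfi2006distribution}, specialized to a finite-pseudo-dimension class.

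Assembling: insert these bounds into the basic inequality, use $\|\hat f - f^\star\|_n^2 \ge \tfrac12\|g\|_n^2 - \|f^o-f^\star\|_n^2$ (and the corresponding lower bound for the population norm obtained from the second display), and move the $\|g\|^2$ terms to the left. Taking expectations and bounding $\mathds{E}\|f^o-f^\star\|_n^2 = \|f^o-f^\star\|^2 = \inf_{f\in\mathcal F}\mathds{E}[(f(X)-f^\star(X))^2]$ gives
\[
\mathds{E}\bigl[(T_L f_n(X) - f^\star(X))^2\bigr] \;\le\; c\,\max\{\sigma^2,L^2\}\,\frac{d(1+\log n)}{n} \;+\; 8\,\inf_{f\in\mathcal F}\mathds{E}\bigl[(f(X)-f^\star(X))^2\bigr],
\]
where the constant $8$ accumulates from the repeated use of $(a+b)^2\le 2a^2+2b^2$ when the approximation error is carried through both the empirical and the population norm, and $\max\{\sigma^2,L^2\}$ from merging the noise variance $\sigma^2$ with the $2L$-range of $g$ in the Bernstein step. (Since the statement is quoted verbatim from \cite{gyorfi2006distribution}, one could also simply cite it; the above is the route I would follow to reprove it.)

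The main obstacle is the uniform deviation step: obtaining the \emph{logarithmic} factor $1+\log n$ --- rather than a cruder polynomial loss --- requires genuine chaining and a peeling (localization) argument rather than a single union bound, and the unboundedness of $\varepsilon$ forces a truncation whose discarded mass must be shown negligible using only the conditional-variance bound $\mathds{V}[\varepsilon\,|\,X]\le\sigma^2$; this is where the sharp constant $\max\{\sigma^2,L^2\}$ is earned. The rest is routine bookkeeping.
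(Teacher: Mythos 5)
You should note first that the paper does not prove this statement at all: it is Theorem~11.3 of \cite{gyorfi2006distribution}, recalled verbatim ``for the sake of completeness,'' and the proofs of Theorem~\ref{th:full_model_fin_sample} and Theorem~\ref{th:approx_model_fin_sample} simply invoke it as a black box. So the parenthetical remark at the end of your proposal --- that one could simply cite it --- is in fact exactly what the paper does, and your sketch is an attempt to reprove an external result rather than to reproduce an argument the paper contains.

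Taken as a reproof outline, your route (basic inequality, noise truncation, chaining plus peeling/localization) is the modern empirical-process path and differs from the book's actual argument, which decomposes $\mathds{E}\|T_L f_n - f^\star\|^2$ into $\mathds{E}\bigl[\|T_L f_n - f^\star\|^2 - 2\|T_L f_n - f^\star\|_n^2\bigr] + 2\,\mathds{E}\|T_L f_n - f^\star\|_n^2$, controls the first term by a ratio-type uniform deviation inequality with covering numbers for the clipped (hence uniformly bounded) class --- this is where $d(1+\log n)/n$ and $\max\{\sigma^2,L^2\}$ appear --- and handles the second term by the exact fixed-design projection computation for linear least squares, which yields the approximation term with its constant. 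More importantly, your very first step has a concrete flaw: from $\|f^\star\|_\infty \le L$ you cannot conclude $\|T_L f_n - Y\|_n^2 \le \|f_n - Y\|_n^2$, because $Y_i = f^\star(X_i) + \varepsilon_i$ is not confined to $[-L,L]$ (the noise is unbounded); clipping moves $f_n(X_i)$ closer to $f^\star(X_i)$ but possibly away from $Y_i$, so the basic inequality does not hold for $\hat f = T_L f_n$ as written. The standard fix --- and what the book does --- is to apply the empirical-risk-minimization property to the unclipped $f_n$, and to use clipping only in the comparisons to $f^\star$ (via $\|T_L f_n - f^\star\|_n \le \|f_n - f^\star\|_n$ and the boundedness of $T_L f_n - f^\star$ needed for the empirical-to-population step). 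Finally, the heavy lifting in your sketch (the two displayed uniform deviation bounds with the $d(1+\log n)/n$ rate) is asserted by appeal to the very chapter whose theorem is being proved, so as a self-contained proof the proposal is incomplete, and as a citation it is redundant with what the paper already does.
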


\begin{proof}[Proof of Theorem \ref{th:full_model_fin_sample}]
Since Assumptions of Theorem 11.3 in \cite{gyorfi2006distribution} are satisfied, we have
\begin{align*}
    \mathds{E}[(T_L f_{\hat{\beta}_{\full}}(Z) - f^{\star}(Z))^2] \leq c \max\{\sigma^2, L^2\} \frac{p (1+\log n)}{n} + 8 \inf_{f \in \mathcal{F}} \mathds{E}[ (f(Z) - f^{\star}(Z))^2],
\end{align*}
Since the model is well-specified, the second term is null. Besides, 
\begin{align*}
\mathds{E}[(Y - T_L f_{\hat{\beta}_{\full}}(Z))^2]
& \leq \mathds{E}[(Y - f^{\star}(Z))^2] + \mathds{E}[(T_L f_{\hat{\beta}_{\full}}(Z) - f^{\star}(Z) )^2]\\
& \leq \sigma^2 + c \max\{\sigma^2, L^2\} \frac{p (1+\log n)}{n},
\end{align*}
which concludes the proof since the full linear model has $p = 2^{d-1}(d+2)$ parameters.

To address the second statement of Theorem~\ref{th:full_model_fin_sample}, recall that in our setting, the dimension $d$ is fixed and does not grow to infinity with $n$. Let $\mathcal{M} = \{m \in \{0,1\}^d, \mathds{P}[M=m]>0\}$ and, for all $m \in \mathcal{M},$ $N_m = |\{i: M_i = m\}|$. Note that, the estimator in Theorem~\ref{th:full_model_fin_sample} is nothing but $|\mathcal{M}|$ linear estimators, each one being fitted on data corresponding to a specific missing pattern $m \in \mathcal{M}$.  Thus, according to \cite{tsybakov2003optimal}, we know that, there exists constants $c_1, c_2>0$, such that, for each missing pattern $m \in \mathcal{M}$, we have the lower bound,
\begin{align*}
 \mathds{E}[(Y - T_L f_{\hat{\beta}_{\full}}(Z))^2 | M = m, N_m]  \geq  \sigma^2 + c_1 \frac{d+1 - \|m\|_0}{N_m}\mathds{1}_{N_m\geq 1} + c_2 \mathds{1}_{N_m=0}.
\end{align*}
Taking the expectation with respect to $N_m \sim B(n, \mathds{P}[M=m])$ and according to Lemma 4.1 in \cite{gyorfi2006distribution}, we have, for all $m \in \mathcal{M}$,
\begin{align*}
 \mathds{E}[(Y - T_L f_{\hat{\beta}_{\full}}(Z))^2 | M = m]  \geq \sigma^2 + c_1 \frac{2(d+1 - \|m\|_0)}{(n+1)\mathds{P}[M=m] } + c_2 (1-\mathds{P}[M=m])^n.
\end{align*}
Consequently, 
\begin{align*}
R(T_L f_{\hat{\beta}_{\full}}) & = \sum_{m \in \mathcal{M}} \mathds{E}[(Y - T_L f_{\hat{\beta}_{\full}}(Z))^2 | M = m] \mathds{P}[M = m]\\
& \geq \sigma^2 + \frac{2 c_1 }{n+1} \sum_{m \in \mathcal{M}} (d+1 - \|m\|_0) + c_2 \sum_{m \in \mathcal{M}} (1-\mathds{P}[M=m])^n \mathds{P}[M=m]\\
& \geq \sigma^2 + \frac{2 c_1 |\mathcal{M}|}{n+1} + c_2 (1- \min\limits_{m \in \mathcal{M}} \mathds{P}[M=m])^n.
\end{align*}
By assumption, there exists a constant $c$, such that, for all $n$ large enough, we have
\begin{align*}
R(T_L f_{\hat{\beta}_{\full}}) 
& \geq \sigma^2 + \frac{2^d c}{n+1}.
\end{align*}
\end{proof}

\begin{proof}[Proof of Theorem~\ref{th:approx_model_fin_sample}]
As above, 
\begin{align*}
    R(T_L f_{\hat{\beta}_{\trunc}}) & \leq \sigma^2 + \mathds{E}[(f^{\star}(Z) - T_L f_{\hat{\beta}_{\trunc}}(Z))^2]\\
&  \leq \sigma^2 + c \max\{\sigma^2, L^2\} \frac{2d (1+\log n)}{n} 
+ 8 \mathds{E}[(f^{\star}(Z) - f_{{\beta}^{\star}_{\trunc}}(Z))^2].
\end{align*}
To upper bound the last term, note that, for any $\beta_{\trunc}$ we have
\begin{align*}
& \mathds{E}[(f^{\star}(Z) - f_{{\beta}_{\trunc}}(Z))^2]\\
& = \mathds{E} \Bigg[ \beta_{0,0,\trunc}  + \sum_{j=1}^d \beta_{0,j,\trunc} \mathds{1}_{M_j=1} - \sum_{m \in \{0,1\}^d} \beta_{0,m, \full}^{\star} \mathds{1}_{M=m} \\
& \qquad + \Big( \beta_{1, \trunc} - \sum_{m \in \{0,1\}^d} \beta_{1,m, \full}^{\star} \mathds{1}_{M=m}  \Big) X_1 \\
& \qquad + \hdots  + \Big( \beta_{d, \trunc} - \sum_{m \in \{0,1\}^d} \beta_{d,m, \full}^{\star} \mathds{1}_{M=m}  \Big) X_d \Bigg]^2.
\end{align*}
Denoting by $X_{\trunc}$ the design matrix $X$ where each element $\texttt{na}$ has been replaced by zero, and using a triangle inequality, we have
\begin{align*}
& \mathds{E}[ (W \beta^{\star}_{\textrm{full}} - X_{\trunc} \beta_{\trunc})^2]\\
& \leq (d+1) \mathds{E}\Bigg[ \beta_{0,0,\trunc} + \sum_{j=1}^d \beta_{0,j,\trunc} \mathds{1}_{M_j=1} - \sum_{m \in \{0,1\}^d} \beta_{0,m, \full}^{\star} \mathds{1}_{M=m} \Bigg]^2\\
& \quad + (d+1) \sum_{j=1}^d \mathds{E}\Bigg[\Big( \beta_{j, \trunc} - \sum_{m \in \{0,1\}^d} \beta_{j,m, \full}^{\star} \mathds{1}_{M=m}  \Big) X_j \Bigg]^2 
\end{align*}
Now, set for all $j$, $\beta_{0,j,\trunc}=0$ and for all $j=1, \hdots, d$, 
\begin{align*}
    \beta_{j, \trunc} = \mathds{E} \left[ \sum_{m \in \{0,1\}^d} \beta_{j,m, \full}^{\star} \mathds{1}_{M=m} \right]
\end{align*}
and also
\begin{align*}
    \beta_{0,0,\trunc} = \mathds{E} \left[ \sum_{m \in \{0,1\}^d} \beta_{0,m, \full}^{\star} \mathds{1}_{M=m} \right].
\end{align*}
Therefore, for this choice of $\beta_{\trunc}$, 
\begin{align*}
& \mathds{E} [ (W \beta^{\star}_{\textrm{full}} - X_{\trunc} \beta_{\trunc})^2]\\
& \leq (d+1)  \mathds{V}\Bigg[\sum_{m \in \{0,1\}^d} \beta_{0,m, \full}^{\star} \mathds{1}_{M=m} \Bigg]  + (d+1) \|X\|_{\infty}^2 \sum_{j=1}^d \mathds{V}\Big[ \sum_{m \in \{0,1\}^d} \beta_{j,m, \full}^{\star} \mathds{1}_{M=m}  \Big] \\
& \leq 8(d+1)^2\|f^{\star}\|_{\infty}^2.
\end{align*}
Finally, by definition of $\beta^{\star}_{\trunc}$, we have
\begin{align*}
  \mathds{E}[(f^{\star}(Z) - f_{{\beta}^{\star}_{\trunc}}(Z))^2] & \leq \mathds{E}[(f^{\star}(Z) - f_{{\beta}_{\trunc}}(Z))^2] \\
& \leq 8(d+1)^2\|f^{\star}\|_{\infty}^2.
\end{align*}
Finally, 
\begin{align*}
    R(T_L f_{\hat{\beta}_{\trunc}}) 
& \leq  \sigma^2 + c \max\{\sigma^2, L^2\} \frac{d (1+\log n)}{n} 
+ 64 (d+1)^2 L^2,
\end{align*}
since $\|f^{\star}\|_{\infty} \leq L$, according to Assumption \ref{ass:th_finite_sample}.
\end{proof}

\section{Proof of Theorem~\ref{th:MLP}}
\label{sec:proof_mlp}%
Let $W^{(1)} \in \mathbb R^{2^d \times 2d}$ be the weight matrix connecting the input layer to the hidden layer, and $W^{(2)} \in \mathbb R^{2^d}$ the matrix connecting the hidden layer to the output unit. Let $b^{(1)} \in \mathbb R^{2^d}$ be the bias for the hidden layer and $b^{(2)} \in \mathbb R$ the bias for the output unit. With these notations, the activations of the hidden layer read:
\begin{align*}
    \forall k \in \llbracket 1, 2^d \rrbracket,\, a_k = W_{k,.}^{(1)}(X, M) + b^{(1)}_k
\end{align*}

Splitting $W^{(1)}$ into two parts $W^{(X)}, W^{(M)} \in \mathbb R^{2^d \times d}$, the activations can be rewritten as:
\begin{align*}
    \forall k \in \llbracket 1, 2^d \rrbracket,\, a_k = W_{k,.}^{(X)} X + W_{k,.}^{(M)} M + b^{(1)}_k
\end{align*}

\textbf{Case 1}: Suppose that $\forall k \in \llbracket 1, 2^d \rrbracket,\, \forall j \in \llbracket 1, d \rrbracket,\, W_{k,j}^{(X)} \ne 0$.\\
With this assumption, the activations can be reparametrized by posing $G_{k, j} = W^{(M)}_{k, j}/W^{(X)}_{k, j}$, which gives:
\begin{align*}
    \forall k \in \llbracket 1, 2^d \rrbracket,\, a_k &= W_{k,.}^{(X)} X + W_{k,.}^{(X)} \odot G_{k, .} M + b^{(1)}_k\\
    &= W_{k,obs(M)}^{(X)} X_{obs(M)} + W_{k,mis(M)}^{(X)} G_{k, mis(M)} + b^{(1)}_k
\end{align*}
and the predictor for an input $(x, m) \in \mathbb R^d \times \left\{0, 1 \right\}^d$ is given by:
\begin{align*}
    y(x, m) &= \sum_{k=1}^{2^d} W_k^{(2)} ReLU(a_k^{(1)}) + b^{(2)}\\
    &= \sum_{k=1}^{2^d}  W_k^{(2)} ReLU(W_{k, obs(m)}^{(X)} x_{obs(m)} + W_{k, mis(m)}^{(X)} G_{k, mis(m)} + b_k^{(1)}) + b^{(2)}
\end{align*}
We will now show that there exists a configuration of the weights
$W^{(X)}$, $G$, $W^{(2)}$, $b^{(1)}$ and $b^{(2)}$ such that the
predictor $y$ is exactly the Bayes predictor. To do this, we will first
show that we can choose $G$ and $b^{(1)}$ such that the points with a
given missing-values pattern all activate one single hidden unit, and
conversely, a hidden unit can only be activated by a single
missing-values pattern. This setting amounts to having one linear
regression per missing-values pattern. Then, we will show that $W^{(X)}$
and $W^{(2)}$ can be chosen so that for each missing-values pattern, the slope and bias match those of the Bayes predictor.\\

\paragraph{One to one correspondence between missing-values pattern and hidden unit}
In this part, $W^{(X)}$, $W^{(2)}$ and $b^{(2)}$ are considered to be fixed to arbitrary values. We denote by $m_k$, $k\in \llbracket 1, 2^d\rrbracket$, the possible values taken by the mask vector $M$. There is a one-to-one correspondence between missing-values pattern and hidden unit if $G$ and $b^{(1)}$ satisfy the following system of $2^{2d}$ inequations:

%\begin{equation*}
\begin{numcases}{\forall x \in \supp(X),\, \forall k \in \llbracket 1, 2^d\rrbracket,}
    W_{k, obs(m_k)}^{(X)} x_{obs(m_k)} + W_{k, mis(m_k)}^{(X)} G_{k, mis(m_k)} + b_{k}^{(1)} \geq 0 & \label{eq:activation}\\
    W_{k, obs(m^\prime)}^{(X)} x_{obs(m^\prime)} + W_{k, mis(m^\prime)}^{(X)} G_{k, mis(m^\prime)} + b_k^{(1)} \leq 0 & $\forall m^\prime \ne m_k$ \label{eq:inactivation}
\end{numcases}
%\end{equation*}
i.e., missing-values pattern $m_k$ activates the $k^{th}$ hidden unit but no other missing-values pattern activates it.\\

Hereafter, we suppose that the support of the data is finite, so that there exist $M \in \mathbb R^+$ such that for any $j\in\llbracket 1, d\rrbracket$, $\left|x_j\right|<M$. As a result, we have:
\begin{align*}
    \abs{W_{k, obs(m_k)}^{(X)} x_{obs(m_k)}} & \leq M \sum_{j \in obs(m_k)} \abs{W_{k, j}^{(X)}}\\
    & \leq M \abs{obs(m_k)} \max_{j \in obs(m_k)} \abs{W_{k, j}^{(X)}}\\
    &= K_k \abs{obs(m_k)}
\end{align*}
where we define $K_k = M \underset{j \in obs(m_k)}{\max} \abs{W_{k, j}^{(X)}}$. We also define $I^{(1)}_k \in \mathbb R$ such that:
\begin{equation}
    \label{eq:I1}
    \forall j \in mis(m_k),\, W_{k, j}^{(X)} G_{k, j} = I^{(1)}_k
\end{equation}
Then satisfying inequation (\ref{eq:activation}) implies satisfying the following inequation:
\begin{equation} \label{eq:activation2}
    \forall k \in \llbracket 1, 2^d\rrbracket,\, - \abs{obs(m_k)}K_k + \abs{mis(m_k)} I^{(1)}_k + b_k^{(1)} \geq 0
\end{equation}

Similarly, we define a quantity $I^{(2)}_k \in \mathbb R$ which satisfies:
\begin{equation}
    \label{eq:I2}
    \forall j \in obs(m_k),\, W_{k, j}^{(X)} G_{k, j} = I^{(2)}_k
\end{equation}
A missing-values pattern $m^\prime \ne m_k$ differs from $m_k$ by a set of entries $\mathcal J \subseteq mis(m_k)$ which are missing in $m_k$ but observed in $m^\prime$, and a set of entries  $\mathcal L \subseteq obs(m_k)$ which are observed in $m_k$ but missing in $m^\prime$. We will call a pair $\Jcal \subseteq mis(m_k)$, $\Lcal \subseteq obs(m_k)$ such that $\abs{\Jcal \cup \Lcal} \ne 0$ a \emph{feasible} pair. With these quantities, satisfying inequation \ref{eq:inactivation} implies satisfying the following inequation:
\begin{equation} \label{eq:inactivation2}
    \forall k \in \llbracket 1, 2^d\rrbracket,\,\forall  (\Jcal, \Lcal) \text{ feasible},\, \left(\abs{obs(m_k)} + \abs \Jcal - \abs \Lcal \right) K_k + \left(\abs{mis(m_k)} - \abs \Jcal \right) I^{(1)}_k + \abs \Lcal I^{(2)}_k + b_k^{(1)} \leq 0
\end{equation}

Thus, by (\ref{eq:activation2}) and (\ref{eq:inactivation2}), a one to one
correspondence between missing-values pattern and hidden unit is possible if there exists $I^{(1)}_k$, $I^{(2)}_k$, $b^{(1)}_k$ such that:
\begin{equation}
\label{eq:b}
\forall k \in \llbracket 1, 2^d\rrbracket,
    \begin{cases}
    \abs{mis(m_k)} I^{(1)}_k + b_k^{(1)} \geq \abs{obs(m_k)}K_k &\\
    \abs{mis(m_k)} I^{(1)}_k + b_k^{(1)} \leq - \abs{obs(m_k)}K_k - (\abs \Jcal - \abs \Lcal)K_k + \abs \Jcal I_k^{(1)} - \abs \Lcal I^{(2)}_k
    & \forall  (\Jcal, \Lcal) \text{ feasible}
\end{cases}
\end{equation}

Because $b_k^{(1)}$ can be any value, this set of inequations admits a solution if for any feasible $(\Jcal, \Lcal)$:
\begin{align*}
    &\abs{obs(m_k)}K_k < - \abs{obs(m_k)}K_k - (\abs \Jcal - \abs \Lcal)K_k + \abs \Jcal I_k^{(1)} - \abs \Lcal I^{(2)}_k\\
    \iff & 2\abs{obs(m_k)}K_k + (\abs \Jcal - \abs \Lcal)K_k < \abs \Jcal I_k^{(1)} - \abs \Lcal I^{(2)}_k\\
    \iff &
    \begin{cases}
        \frac{-2\abs{obs(m_k}K_k}{\abs \Lcal} + K_k > I_k^{(2)} \quad \text{ if } \abs \Jcal = 0\\
        \frac{2\abs{obs(m_k}K_k}{\abs \Jcal} + K_k < I^{(1)}_k \quad \text{ if } \abs \Lcal = 0\\
        I_k^{(1)} > K_k + \frac{\abs{obs(m_k)} K_k}{\abs \Jcal} \quad \text{and} \quad I^{(2)}_k < K_k - \frac{\abs{obs(m_k)} K_k}{\abs \Lcal} \quad \text{otherwise}
    \end{cases}
\end{align*}
Satisfying these inequalities for any feasible $(\Jcal, \Lcal)$ can be achieved by choosing:
\begin{align}
    I_k^{(1)} > (1 + 2\abs{obs(m_k})K_k \label{eq:condition_I1}\\
    I^{(2)}_k < (1 - 2\abs{obs(m_k})K_k \label{eq:condition_I2}
\end{align}
To conclude, it is possible to achieve a one to one correspondence
between missing-values pattern and hidden unit by choosing $G$ and $b^{(1)}$ such that for the $k^{th}$ hidden unit:
\begin{equation}
    \label{eq:one_to_one}
    \begin{cases}
    I_k^{(1)} > (1 + 2\abs{obs(m_k})K_k & \quad \text{by } \ref{eq:I1} \text{ and } \ref{eq:condition_I1}\\
    I_k^{(2)} < (1 - 2\abs{obs(m_k})K_k & \quad \text{by } \ref{eq:I2} \text{ and } \ref{eq:condition_I2}\\
    b_k^{(1)} \text{ satisfies } \ref{eq:b}
\end{cases}
\end{equation}

\paragraph{Equating slopes and biases with that of the Bayes predictor}
We just showed that it is possible to choose $G$ and $b^{(1)}$ such that
the points with a given missing-values pattern all activate one single
hidden unit, and conversely, a hidden unit can only be activated by a
single missing-values pattern. As a consequence, the predictor for an input $(x, m_k) \in \mathbb R^d \times \left\{0, 1 \right\}^d$ is given by:
\begin{align*}
    y(x, m_k) &= \sum_{h=1}^{2^d}  W_h^{(2)} ReLU(W_{h, obs(m_k)}^{(X)} x_{obs(m_k)} + W_{h, mis(m_k)}^{(X)} G_{h, mis(m_k)} + b_h^{(1)}) + b^{(2)}\\
    &= W_k^{(2)} \left(W^{(X)}_{k, obs(m_k)} x_{obs(m_k)} + W_{k, mis(m_k)}^{(X)} G_{k, mis(m_k)} + b_k^{(1)}\right) + b^{(2)}
\end{align*}
For each missing-values pattern, it is now easy to choose $W^{(X)}_{k,
obs(m_k)}$ and $W^{(2)}$ so that the slopes and biases of this linear
function match those of the Bayes predictor defined in
proposition~\ref{prop:condexp}. Let $\beta_k \in \mathbb
R^{\abs{obs(m_k)}}$ and $\alpha_k \in \mathbb R$ be the slope and bias of
the Bayes predictor for missing-values pattern $m_k$. Then setting
\begin{align}
    \label{eq:slopes_and_biases}
    \begin{cases}
        W_k^{(2)} \left(W_{k, mis(m_k)}^{(X)} G_{k, mis(m_k)} + b_k^{(1)} \right) + b^{(2)} = \alpha_k\\
        W_k^{(2)} W_{k, obs(m_k)}^{(X)} = \beta_k
    \end{cases}
\end{align}
equates the slope and bias of the MLP to those of the bias predictor.

\paragraph{Construction of weights for which the MLP is the Bayes predictor} We have shown that achieving a one to one correspondence between missing data pattern and hidden units involves satisfying a set of inequations on the weights (\ref{eq:one_to_one}), while equating the slopes and biases to those of the Bayes predictor involves another set of equations (\ref{eq:slopes_and_biases}). To terminate the proof, it remains to be shown that the whole system of equations and inequations admits a solution.\\

We start by working on the one-to-one correspondence system of inequations (\ref{eq:one_to_one}). Let $\epsilon > 0$ be a parameter. Inequations (\ref{eq:condition_I1}) and (\ref{eq:condition_I2}) are satisfied by choosing:
\begin{align}
    I_k^{(1)} = (1 + 2\abs{obs(m_k})K_k + \epsilon \label{eq:condition_I1_eps}\\
    I^{(2)}_k = (1 - 2\abs{obs(m_k})K_k - \epsilon \label{eq:condition_I2_eps}
\end{align}
According to the second inequation in (\ref{eq:b}), $b_k^{(1)}$ is upper bounded as:
\begin{align*}
    b_k^{(1)} \leq - \abs{obs(m_k)}K_k - \abs{mis(m_k)} I^{(1)}_k - (\abs \Jcal - \abs \Lcal)K_k + \abs \Jcal I_k^{(1)} - \abs \Lcal I^{(2)}_k
\end{align*}
This inequation can be simplified:
\begin{align*}
     b_k^{(1)} &\leq - \abs{obs(m_k)}K_k - \abs{mis(m_k)} I^{(1)}_k + \abs \Jcal (I_k^{(1)} - K_k) + \abs \Lcal (K_k - I^{(2)}_k)\\
    &= -\abs{obs(m_k)}K_k - \abs{mis(m_k)} + \abs \Jcal (2\abs{obs(m_k)}K_k + \epsilon) + \abs \Lcal(2\abs{obs(m_k)} K_k +\epsilon)
\end{align*}
The smallest upper bound is obtained for $\abs{\Jcal \cup \Lcal}=1$ which gives:
\begin{equation*}
     b_k^{(1)} \leq \abs{obs(m_k)}K_k - \abs{mis(m_k)} +  \epsilon
\end{equation*}
According to the first inequation in (\ref{eq:b}), $b_k^{(1)}$ is also lower bounded as:
\begin{equation*}
    b_k^{(1)} \geq \abs{obs(m_k)}K_k - \abs{mis(m_k)} I^{(1)}_k
\end{equation*}
A valid choice for $b_k^{(1)}$ is the mean of its upper and lower bounds. We therefore choose to set:
\begin{equation}
    b_k^{(1)} = \abs{obs(m_k)}K_k - \abs{mis(m_k)} I^{(1)}_k + \frac{\epsilon}{2}
\end{equation}

To summarise, we can restate the conditions for one to one correspondence as:
\begin{align}
    \label{eq:one_to_one_eps}
    \epsilon &> 0\\
    \label{eq:one_to_one_I1}
    I_k^{(1)} &= (1 + 2\abs{obs(m_k})K_k + \epsilon\\
    \label{eq:one_to_one_I2}
    I^{(2)}_k &= (1 - 2\abs{obs(m_k})K_k - \epsilon\\
    \label{eq:one_to_one_b}
    b_k^{(1)} &= \abs{obs(m_k)}K_k - \abs{mis(m_k)} I^{(1)}_k + \frac{\epsilon}{2}
\end{align}

We now turn to the slopes and biases equations (\ref{eq:slopes_and_biases}). Replacing $b_k^{(1)}$ in the bias equation by its value in (\ref{eq:one_to_one_b}) we get:
\begin{align*}
    & W_k^{(2)} \left(W_{k, mis(m_k)}^{(X)} G_{k, mis(m_k)} + b_k^{(1)} \right) + b^{(2)} = \alpha_k \\
    \iff & W_k^{(2)} \left(\abs{mis(m_k)} I^{(1)}_k + b_k^{(1)} \right) + b^{(2)} = \alpha_k \\
    \iff & W_k^{(2)} \left(\abs{obs(m_k)}K_k + \frac{\epsilon}{2} \right) + b^{(2)} = \alpha_k \\
\end{align*}
Putting together the one to one correspondence conditions, the slope and biases equations as well as the variable definitions, we get a set of 8 equations and 1 inequation:
\begin{align}
    \label{eq:all_eps}
    &\epsilon > 0\\
    \label{eq:all_I1}
    &I_k^{(1)} = (1 + 2\abs{obs(m_k})K_k + \epsilon\\
    \label{eq:all_I2}
    &I^{(2)}_k = (1 - 2\abs{obs(m_k})K_k - \epsilon\\
    \label{eq:all_b}
    &b_k^{(1)} = \abs{obs(m_k)}K_k - \abs{mis(m_k)} I^{(1)}_k + \frac{\epsilon}{2}\\
    \label{eq:all_bias}
    &W_k^{(2)} \left(\abs{obs(m_k)}K_k + \frac{\epsilon}{2} \right) + b^{(2)} = \alpha_k\\
    \label{eq:all_slope}
    &W_k^{(2)} W_{k, obs(m_k)}^{(X)} = \beta_k\\
    \label{eq:all_K}
    &K_k = M \underset{j \in obs(m_k)}{\max} \abs{W_{k, j}^{(X)}}\\
    \label{eq:all_Gmis}
    &\forall j \in mis(m_k),\, W_{k, j}^{(X)} G_{k, j} = I^{(1)}_k\\
    \label{eq:all_Gobs}
    &\forall j \in obs(m_k),\, W_{k, j}^{(X)} G_{k, j} = I^{(2)}_k
\end{align}
One can verify that this system of inequations has a solution. Indeed, choose $W_{k, obs(m_k)}^{(X)}$ proportional to $\beta_k$ so that equation~(\ref{eq:all_slope}) can be verified. This imposes a value for $W_k^{(2)}$ via (\ref{eq:all_slope}) and a value for $K_k$ via (\ref{eq:all_K}). In turn, it imposes a value for $\epsilon$ via (\ref{eq:all_bias}): $\epsilon = 2\left( \alpha_k - b^{(2)} - W_k^{(2)} \abs{obs(m_k)}K_k \right)$. The value obtained for $\epsilon$ is positive if we choose $b^{(2)}$ sufficiently negative. Note that there is one single value of $b^{(2)}$ for all units so $b^{(2)}$ should be chosen by considering all units. Then $K_k$ and $\epsilon$ impose $I^{(1)}_k$ and $I^{(2)}_k$ via (\ref{eq:all_I1}) and (\ref{eq:all_I2}). $K_k$, $\epsilon$ and $I^{(1)}_k$ impose $b_k^{(1)}$ via (\ref{eq:all_b}). Finally $W_{k, .}^{(X)}$, $I^{(1)}_k$ and $I^{(2)}_k$ impose $G$ via (\ref{eq:all_Gmis}) and (\ref{eq:all_Gobs}).

\textbf{Case 2}: Suppose that $\exists k \in \llbracket 1, 2^d \rrbracket,\, \exists j \in \llbracket 1, d \rrbracket: W_{k,j}^{(X)} = 0$.\\

Recall that the proof which shows that we can achieve a one to one
correspondence between missing-values pattern and hidden unit relies on the assumption that $\forall k \in \llbracket 1, 2^d \rrbracket,\, \forall j \in \llbracket 1, d \rrbracket,\, W_{k,j}^{(X)} \ne 0$. However, if there is a slope $\beta_k$ of the Bayes predictor such that its $j^{th}$ coefficient is 0, then we must choose $W_{k, j}^{(X)} = 0$ to achieve Bayes consistency. In such a case, we need to extend the one to one correspondence proof to the case where an entry of $W_{k, j}^{(X)}$ can be zero. It turns out to be easy.

In this case, we cannot pose $G_{k, j} = W_{k, j}^{(M)}/W_{k, j}^{(X)}$. Let $\mathcal Z_k \subseteq \llbracket 1, d \rrbracket$ be the set of indices such that $\forall j \in \mathcal Z_k,\, W_{k,j}^{(X)} = 0$. The whole reasoning exposed in case 1 still holds if we replace $obs(m)$ by $obs(m) \setminus \mathcal Z_k$ and $mis(m)$ by $mis(m) \setminus \mathcal Z_k$.

\section{Complementary figures}
\subsection{Comparison at $n=75\,000$}

\autoref{fig:boxplots} gives a box plot view of the behavior at
$n=75\,000$. It is complementary to the learning curves, though it
carries the same information.

\begin{figure*}[t!]
    \includegraphics[width=.39\linewidth]{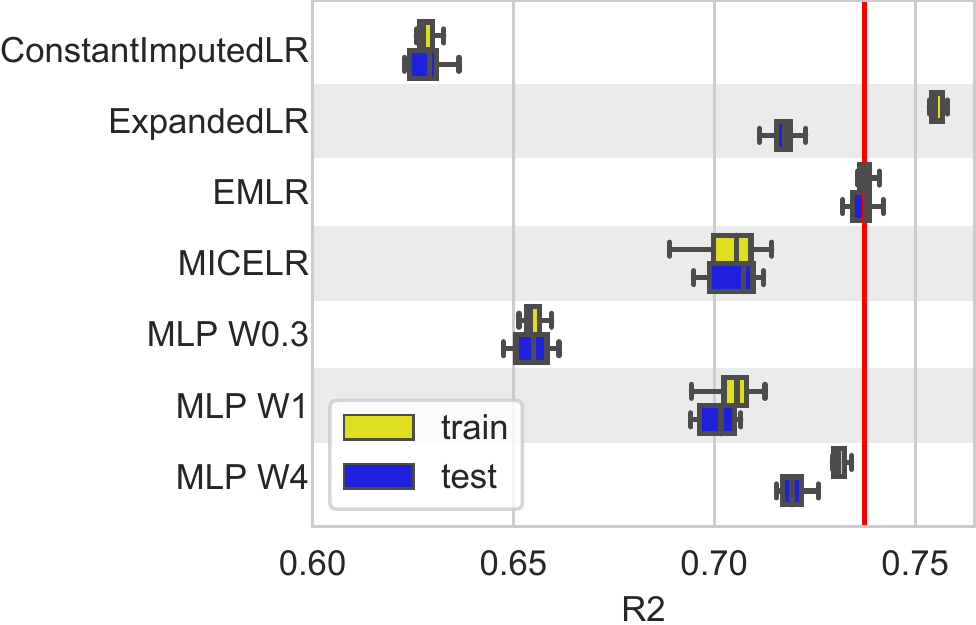}%
    \llap{\raisebox{.252\linewidth}{\sffamily\small{\bfseries Mixture 1}
	    (MCAR)\quad\qquad}}%
    \hspace*{-.095\linewidth}\hfill%
    \includegraphics[width=.39\linewidth]{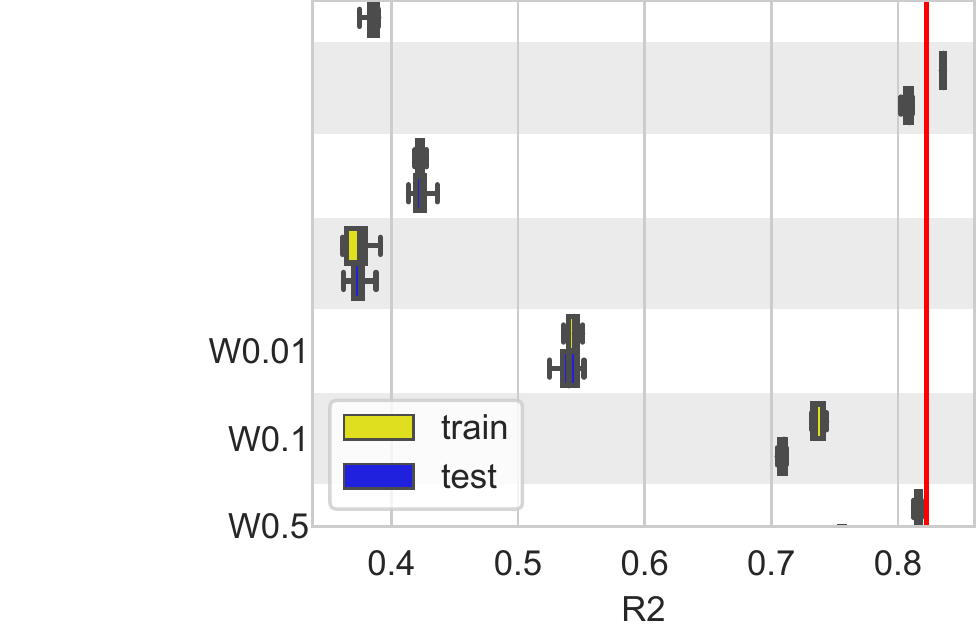}%
    \llap{\raisebox{.252\linewidth}{\sffamily\bfseries\small Mixture 3
	    \qquad\qquad}}%
    \hspace*{-.1\linewidth}\hfill%
    \includegraphics[width=.39\linewidth]{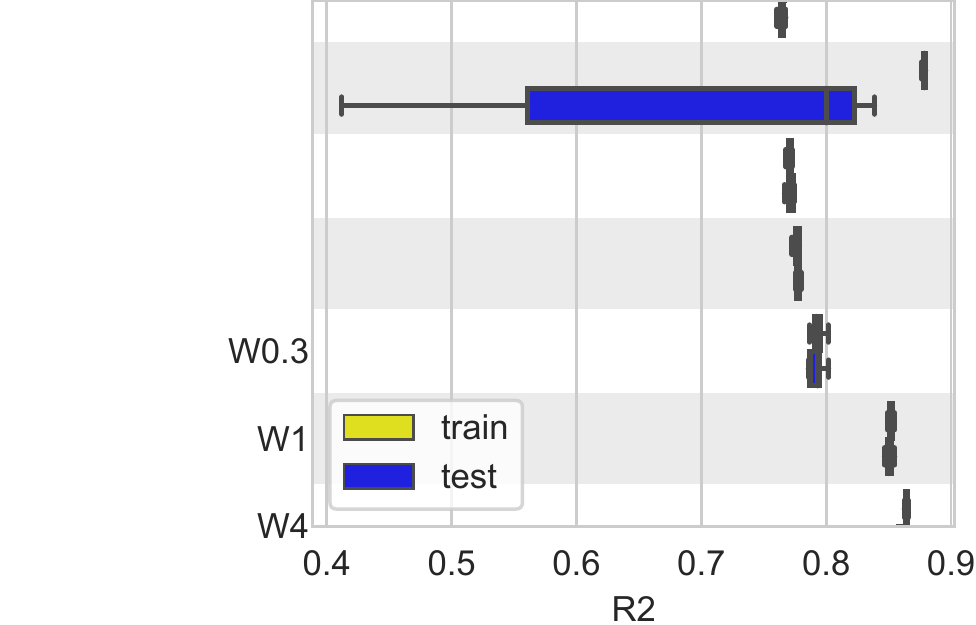}%
    \llap{\raisebox{.252\linewidth}{\sffamily\small{\bfseries Self-masked}
	(MNAR)~\qquad}}%
    \caption{\textbf{Prediction accuracy} R2 score for the 3 data types with $n=75,000$ training samples and in dimension $d=10$. The quantities displayed are the mean and standard deviation over 5 repetitions.
    }
    \label{fig:boxplots}
\end{figure*}

\subsection{Experiments on growing MLP's width}
\label{sec:scaling_q}%

\autoref{fig:MLP_scaling_q} shows the performance of the MLP in the
various simulation scenarios as a function of the number of hidden units
of the networks. In each scenario, the number of hidden units is taken
proportional to a function of the input dimension $d$:
\begin{description}
    \item[mixture 1]: $n_h \propto d$
    \item[mixture 3]: $n_h \propto 2^d$
    \item[selfmasked]: $n_h \propto d$
\end{description}
These results show that the number of hidden units needed by the MLP to
predict well are a function of the complexity of the underlying
data-generating mechanism. Indeed, for the \emph{mixture 1}, the MLP only
needs  $n_h \propto d$ while the missing
values are MCAR, and therefore ignorable. For \emph{selfmasked}, the
challenge is to find the right set of thresholds, after which the
prediction is relatively simple: the MLP also needs 
$n_h \propto d$. On the opposite, for \emph{mixture 3}, the
multiple Gaussians create couplings in the prediction function; as the
consequence, the MLP needs $n_h \propto 2^d$.

\begin{figure*}[t!]
    \includegraphics[width=.33\linewidth]{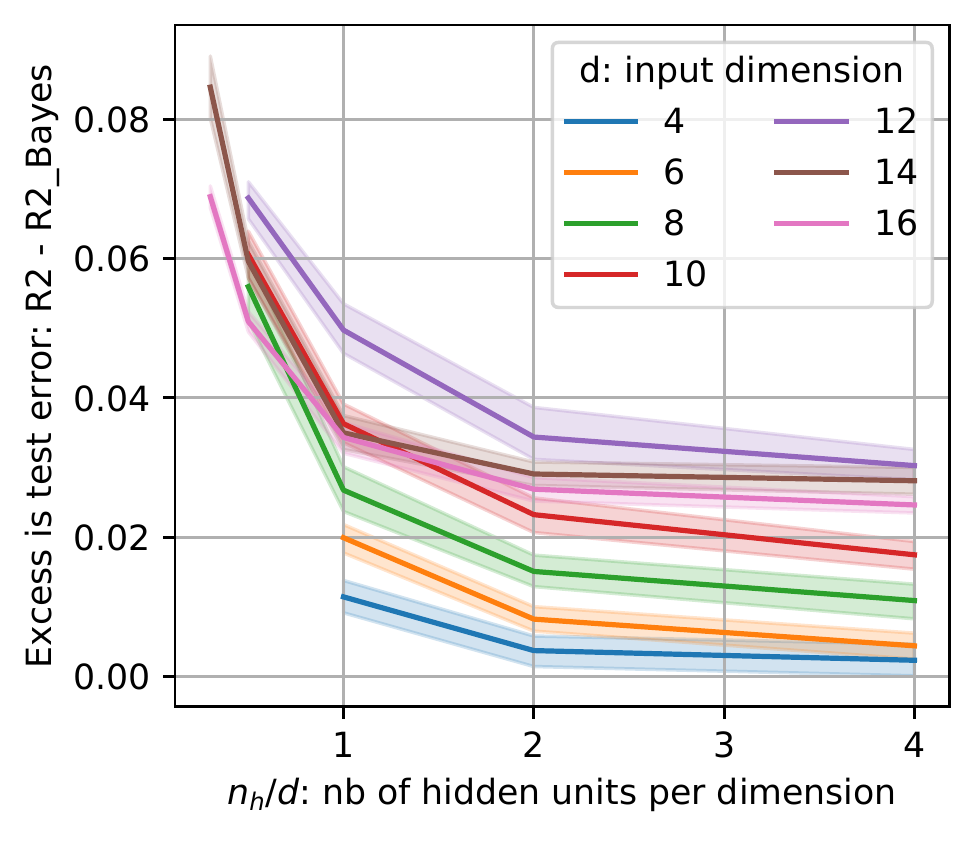}%
    \llap{\raisebox{.285\linewidth}{\sffamily\bfseries\small Mixture 1
	    \qquad\qquad}}%
    \hfill%
    \includegraphics[width=.33\linewidth]{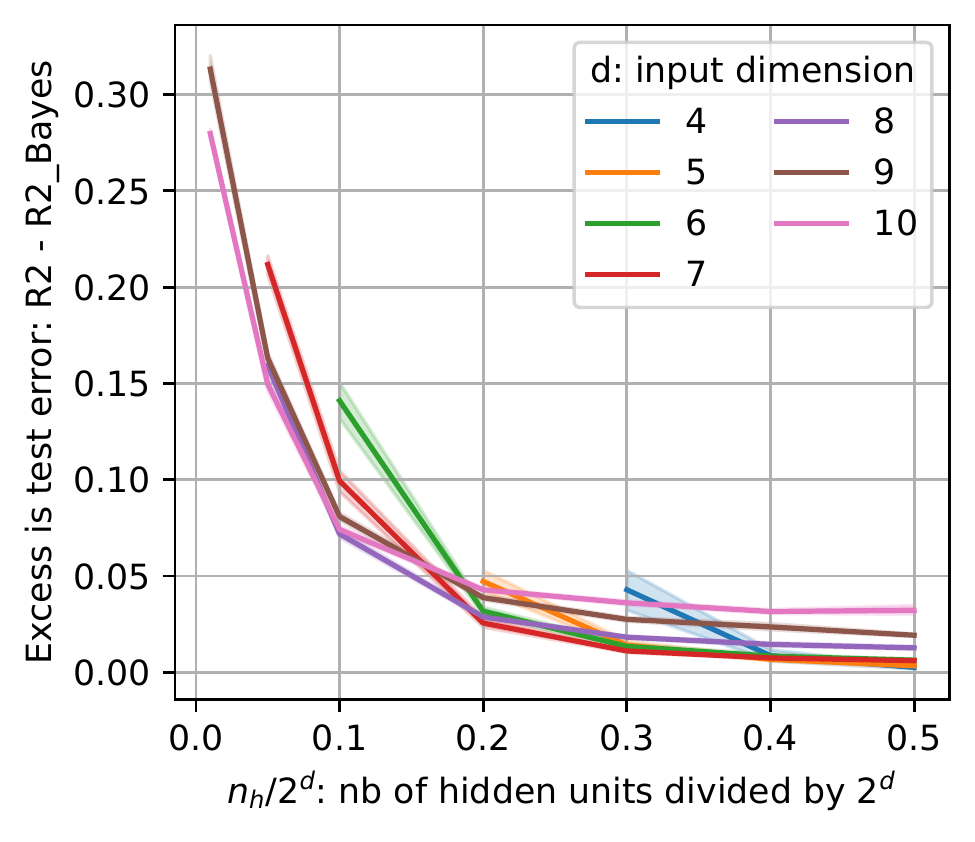}%
    \llap{\raisebox{.285\linewidth}{\sffamily\bfseries\small Mixture 3
	    \qquad\qquad}}%
    \hfill%
    \includegraphics[width=.33\linewidth]{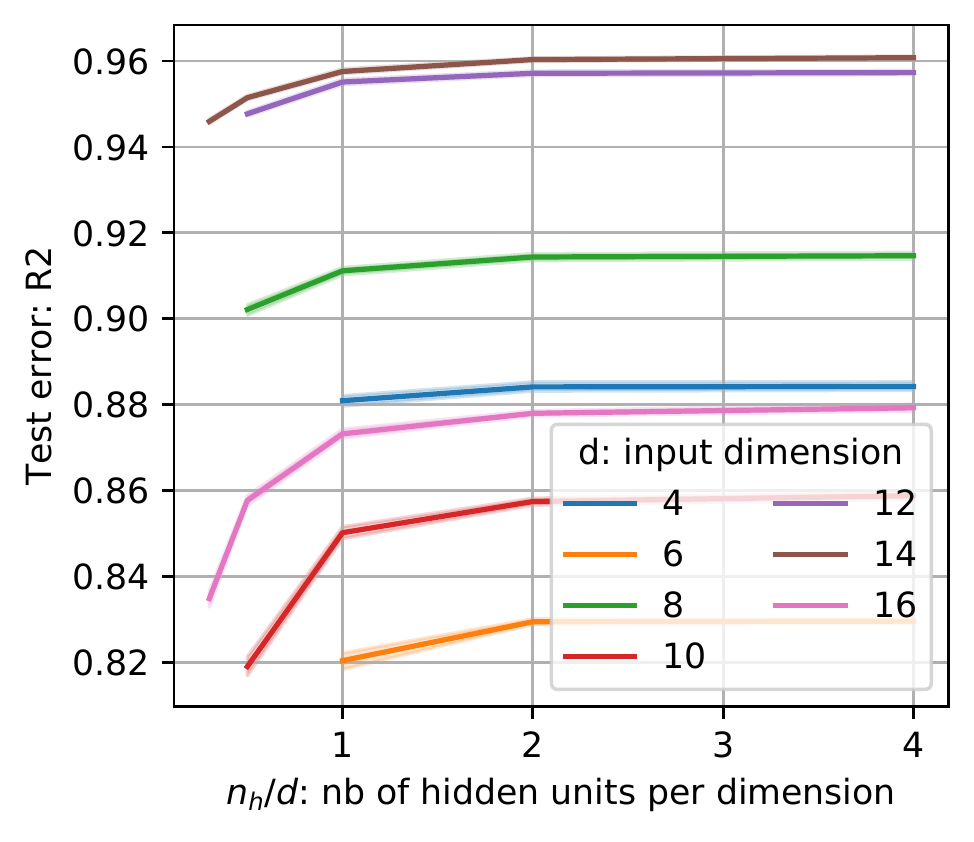}%
    \llap{\raisebox{.285\linewidth}{\sffamily\bfseries\small Self-masked
	\qquad\qquad}}%
    \caption{\textbf{Performance of the one hidden layer MLP as a
function of its number of hidden units} For the mixtures of Gaussians,
the performance is given as the difference between the R2 score of the
MLP and that of the Bayes predictor. For each dimension $d$, multiple
MLPs are trained, each with a different number of hidden units given by
$q \times d$ for mixture 1 and self-masked, $q \times 2^d$ for mixture 3. 75,000 training samples were used for Mixture 1 and Self-masked and 375,000 for Mixture 3.
    \label{fig:MLP_scaling_q}%
    }
\end{figure*}

\end{document}